\def\eqref#1{equation~\ref{#1}}
\def\1{\bm{1}}
\def\vx{{\bm{x}}}
\def\vy{{\bm{y}}}
\DeclareMathAlphabet{\mathsfit}{\encodingdefault}{\sfdefault}{m}{sl}
\SetMathAlphabet{\mathsfit}{bold}{\encodingdefault}{\sfdefault}{bx}{n}
\newcommand{\E}{\mathbb{E}}
\newcommand{\R}{\mathbb{R}}
\newtheorem{prop}{Proposition}[section]
\crefname{prop}{Proposition}{propositions}
\newtheorem{lemma}{Lemma}[section]
\crefname{lemma}{Lemma}{lemmas}
\newtheorem{corr}{Corollary}[section]
\crefname{corr}{Corollary}{corollaries}
\newtheorem{theorem}{Theorem}[section]
\newtheorem{defi}{Definition}[section]
\newcommand{\SpaceX}{\mathrm{X}}
\newcommand{\SpaceY}{\mathrm{Y}}
\newcommand{\SpaceZ}{\mathrm{Z}}
\newcommand{\Borel}{\mathcal{B}}
\DeclareMathOperator{\Ima}{Im}
\title{Kernel of CycleGAN as a \\Principle homogeneous space}
\author{%
	Nikita Moriakov\\
	Radiology, Nuclear Medicine and Anatomy\\
	Radboud University Medical Center\\
	\texttt{nikita.moriakov@radboudumc.nl}
	\And
    Jonas Adler\footnote{Now at Deepmind}\\ 
	Department of Mathematics \\
	KTH -- Royal Institute of Technology\\
	Research and Physics \\
	Elekta\\
	\texttt{jonasadl@kth.se}
	\And
    Jonas Teuwen\\
    Radiology, Nuclear Medicine and Anatomy\\
    Radboud University Medical Center\\
    Department of Radiation Oncology\\
    Netherlands Cancer Institute \\
    \texttt{jonas.teuwen@radboudumc.nl} \\
  }
\begin{document}

\maketitle

\begin{abstract}
  Unpaired image-to-image translation has attracted significant interest due to the invention of CycleGAN, a method which utilizes a combination of adversarial and cycle consistency losses to avoid the need for paired data. It is known that the CycleGAN problem might admit multiple solutions, and our goal in this paper is to analyze the space of exact solutions and to give perturbation bounds for approximate solutions. We show theoretically that the exact solution space is invariant with respect to automorphisms of the underlying probability spaces, and, furthermore, that the group of automorphisms acts freely and transitively on the space of exact solutions. We examine the case of zero `pure' CycleGAN loss first in its generality, and, subsequently, expand our analysis to approximate solutions for `extended' CycleGAN loss where identity loss term is included. In order to demonstrate that these results are applicable, we show that under mild conditions nontrivial smooth automorphisms exist. Furthermore, we provide empirical evidence that neural networks can learn these automorphisms with unexpected and unwanted results. We conclude that finding optimal solutions to the CycleGAN loss does not necessarily lead to the envisioned result in image-to-image translation tasks and that underlying hidden symmetries can render the result utterly useless.
\end{abstract}

\section{Introduction}
\label{s.introduction}
Machine learning methods for image-to-image translation are widely studied and have applications in several fields. In medical imaging, the CycleGAN has found an important application for translating one modality to another, for instance in MR to CT translation \citep{han2017mr,sjolund2015generating,Wolterink}. Classically, these methods are trained in a supervised setting making their applications limited due to the a lack of good paired data. Similar issues appear in e.g.\ transferring the style of one artist to another \citep{gatys} or adding snow to sunny California streets \citep{nvidia_im2im}. Unpaired image-to-image translation models such as CycleGAN \citep{Zhu2017} promise to solve this issue by only enforcing a relationship on a distribution level, thus removing the need for paired data. However, given their widespread use, it is paramount to gain more understanding of their dynamics, to prevent unexpected things from happening, e.g., \citep{Cohen2018}. As a step in that direction, we explore the solution space of the CycleGAN in the subsequent sections of this paper.

The general task of unpaired domain translation can be informally described as follows: given two probability spaces $\SpaceX$ and $\SpaceY$ which represent our domains, we seek to learn a mapping $G:\SpaceX \to \SpaceY$ such that a sample $\vx \in \SpaceX$ is mapped to a sample $G(\vx) \in \SpaceY$ where
\begin{equation}
\label{eq:x2y}
G(\vx) \in Y\text{ is the best representative of $\vx$ in $Y$ }.
\end{equation}
The mapping $G$ is typically approximated by a neural network $G_\theta$ parametrized by $\theta$. Without paired data, directly solving this is impossible but on a distribution level it is easily seen if $G$ solves \cref{eq:x2y} then the distribution of $G(\vx)$ as $\vx$ is sampled from $X$ is equal to that of $Y$. Mathematically, if $\SpaceX = (X, \mathcal X, \mu)$ and $\SpaceY = (Y, \mathcal Y, \nu)$ are probability spaces with probability measures $\mu$ and $\nu$ respectively, this can be written as
\begin{equation}
\label{eq:measeq}
\nu(A) = \mu(\{ \vx: G(\vx) \in A\}) = \mu(G^{-1}(A)) \overset{\mathrm{def}}{\equiv} (G_*\mu)(A) \quad \text{ for all $A \in \mathcal Y$},
\end{equation}
Or in words, the probability measure $\nu$ equals the push-forward measure $G_*\mu$. By Jensen's equality we can relate this to the fixed f-divergence $D_f$:
\begin{equation}
\label{eq:distfx2y}
G_* \mu = \nu \text{ if and only if } D_f\big(G_* \mu \Vert \nu) = 0.
\end{equation}
While adversarial adversarial optimization techniques such as GANs can in principle solve problem \cref{eq:distfx2y}, they remain under-constrained thus not giving a reasonable solution to the original problem \cref{eq:x2y}.

The idea behind the \emph{cycle consistency condition} from \citep{Zhu2017} is to enforce additional constraints by introducing another function $F: \SpaceY \to \SpaceX$, which is also approximated by a neural network and tries to solve the inverse task: for each $\vy \in Y$ find $F(\vy) \in X$ that would be the best translation of $\vy$ to $X$. Similar to the reasoning above, this condition would imply that 
\begin{equation}
\label{eq:measeq2}
\mu = F_* \nu \quad \text{and} \quad D_f\big(F_* \nu \Vert \mu) = 0.
\end{equation}
The goal is to enforce that $F(G(\vx)) \approx \vx$ for all $\vx \in X$ and, similarly, that $G(F(\vy)) \approx \vy$ for all $\vy \in Y$, i.e. to minimize the following \emph{cycle consistency loss}
\begin{equation}\label{eq:consistency-loss}
\mathcal{L}_{\text{cyc}}(G, F) := \E_{\vx \sim \SpaceX} \|(F \circ G)(\vx) - \vx\| +   \mathbb{E}_{\vy \sim \SpaceY} \|(G \circ F)(\vy) - \vy\|,
\end{equation}
where typically the $L^1$ norm is chosen, but in principle any norm can be chosen. Zhu et al.~\citep{Zhu2017} also suggested that an adversarial loss could in principle have been used here as well, but they did not note any performance improvement.

Combining these losses, we arrive at the \emph{CycleGAN loss} defined as
\begin{align*}
    \mathcal{L}(G, F) :=&\
    D_f\big(F_* \mu \Vert \nu) + D_f\big(G_* \nu \Vert \mu) + \alpha_{\text{cyc}} \cdot \mathcal{L}_{\text{cyc}}(G, F),
\end{align*}
where the factor $\alpha_{\text{cyc}} > 0$ determines the weight of the cycle consistency term. We illustrate the CycleGAN model in \cref{fig:cyclegan}. 
\begin{figure}
\includegraphics[width=1.00\linewidth]{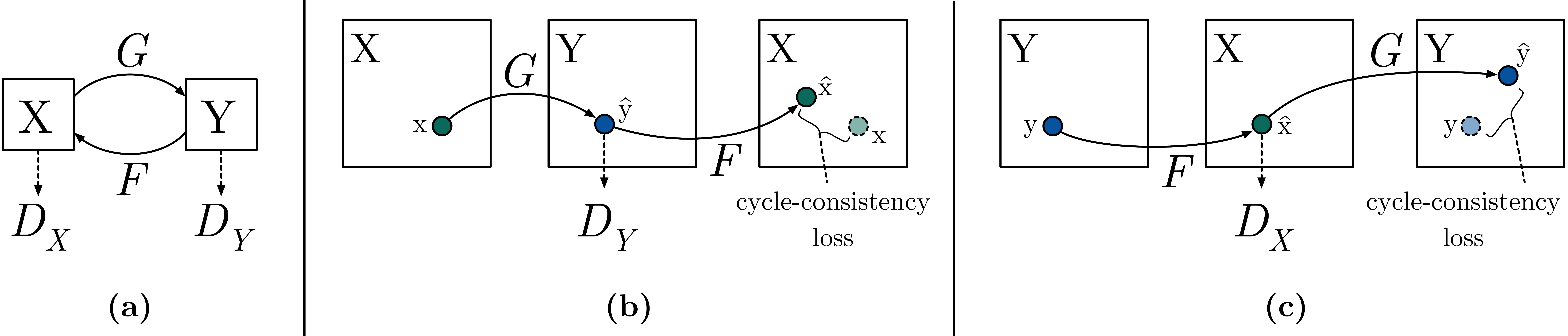}
\caption{CycleGAN model.}
\label{fig:cyclegan}
\end{figure}

Precautions with generative models have been addressed before, for example, unpaired image to image translation can hallucinate features in medical images \citep{Cohen2018}. Furthermore, it was already noted in \citep{Zhu2017} that the CycleGAN might admit multpiple solutions and that the issue of tint shift in image-to-image translation arises due to the fact that for a fixed input image $\vx \in X$ multiple images $\vy_1, \dots, \vy_n \in Y$ with different tints might be equally plausible. Adding identity loss term was suggested in \citep{Zhu2017} to alleviate the tint shift issue, i.e., the \emph{extended CycleGAN loss} is defined as 
\begin{align*}
  \mathcal{L}_{\text{ext}}(G, F) :=&\
    \mathcal L(G, F) + \alpha_{\text{id}} \cdot \left( \mathbb{E}_{\vy \sim \SpaceY} \| F(\vy) - \vy \| + \mathbb{E}_{\vx \sim \SpaceX} \| G(\vx) - \vx \| \right),
\end{align*}
where the factor $\alpha_{\text{id}} \geq 0$ determines the weight of the identity loss term. In general, to properly define the identity loss one needs to represent both $X$ and $Y$ as being the supported on the same manifold, which is limiting if the distributions are substantially different. 

The goal of this work is to study the kernel, or null space, of the CycleGAN loss, which is the set of solutions $(G, F)$ which have zero `pure' CycleGAN loss, and to give a perturbation bounds for approximate solutions for the case of extended CycleGAN loss.  We do the theoretical analysis in \cref{s.nullspace}. We show that under certain assumptions on the probability spaces $\SpaceX, \SpaceY$ the kernel has symmetries which allow for multiple possible solutions in \cref{theorem:invariance}. Furthermore, we show in \cref{theorem:char} and the following remarks that the kernel admits a natural structure of a principle homogeneous space with the automorphism group $\mathrm{Aut}(\SpaceX)$ of $\SpaceX$ acting on the set of solutions freely and transitively. Next, we expand our analysis to the case of approximate solutions for the \emph{extended} CycleGAN loss by proving perturbation bounds in \cref{theorem:pertbound} and \cref{corr:convergence}. We discuss the existence problem of automorphism in \cref{theorem:exst} and \cref{theorem:smoothautos}. We proceed in \cref{s.exper} by showing that unexpected symmetries can be \emph{learned} by a CycleGAN. In particular, when translating the same domain to itself CycleGAN can learn a nontrivial automorphism of the domain. In \cref{s.appendix}, we briefly explain the measure-theoretic language we use heavily in the paper for those readers who are more used to working with distributions, and also remind the reader of some basic notions from differential geometry which we use as well.

\section{Theory}
\label{s.nullspace}
\subsection{CycleGAN kernel as a principle homogeneous space}
The notions of isomorphism of probability spaces and of probability space automorphisms are central to this paper. Intuitively speaking, an isomorphism $f: \SpaceX \to \SpaceY$ of probability spaces $\SpaceX$ and $\SpaceY$ is a bijection between $X$ and $Y$ such that the probability of an event $A \subset Y$ equals the probability of event $\{ x: F(\vx) \in A\} \subset X$. An isomorphism of a probability space to itself is called a probability space automorphism. For example, if our probability space consists of samples from $n$-dimensional spherical Gaussian distribution, then any rotation in $\mathrm{SO}(\R^n)$ is a probability space automorphism. For a precise definition we refer the reader to \cref{s.appendix}. 

Firstly, we prove that if at least one of the probability spaces $\SpaceX, \SpaceY$ admits a nontrivial probability automorphism, then any exact solution in the kernel of CycleGAN can be altered giving a \emph{different} solution.
\begin{prop}[Invariance of the kernel]
\label{theorem:invariance}
Let $\SpaceX = (X, \mathcal X, \mu), \SpaceY = (Y, \mathcal Y, \nu)$ be probability spaces and $\varphi: \SpaceX \to \SpaceX$ be a probability space automorphism. Let $G: X \to Y$ and $F: Y \to X$ be measurable maps satisfying
\begin{equation}
\label{eq:nulleq}
\mathcal L(G, F) = 0.
\end{equation}
Then $F, G$ are probability space isomorphisms and
\begin{equation}
\label{eq:symmeq}
\mathcal L (G \circ \varphi, \varphi^{-1} \circ F) = 0.
\end{equation}
If, furthermore, $\varphi \neq \mathrm{id}_X$,\footnote{Inequality should be understood in the `modulo null sets' sense here, i.e.,
  we assert that there are positive probability sets on which the maps do differ.} then
\begin{equation}
\label{eq.nontriv}
  G \circ \varphi \neq G \quad \text{and} \quad \varphi^{-1} \circ F \neq F.
\end{equation}
\end{prop}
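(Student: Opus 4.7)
The plan is to decompose the hypothesis $\mathcal{L}(G,F)=0$ into three individually vanishing conditions, verify these same conditions for the altered pair $(G\circ\varphi,\varphi^{-1}\circ F)$ by direct calculation, and then argue the non-triviality clause by contradiction. Since every summand of $\mathcal{L}$ is non-negative, $\mathcal{L}(G,F)=0$ forces each term to vanish separately. Vanishing of the $f$-divergence terms gives $G_*\mu=\nu$ and $F_*\nu=\mu$ via \cref{eq:distfx2y}, while $\mathcal{L}_{\text{cyc}}(G,F)=0$ combined with the fact that a non-negative integrand with vanishing expectation is zero almost everywhere yields $F\circ G=\mathrm{id}_X$ $\mu$-a.e.\ and $G\circ F=\mathrm{id}_Y$ $\nu$-a.e. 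Together these say $F$ and $G$ are mutually inverse modulo null sets and intertwine $\mu$ with $\nu$, which is exactly the content of each being a probability-space isomorphism.

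For \cref{eq:symmeq} I would verify the three conditions for $(G\circ\varphi,\varphi^{-1}\circ F)$ in order. Functoriality of the pushforward together with $\varphi_*\mu=\mu$ gives $(G\circ\varphi)_*\mu=G_*(\varphi_*\mu)=G_*\mu=\nu$, and since $\varphi^{-1}$ is itself an automorphism of $\SpaceX$, $(\varphi^{-1}\circ F)_*\nu=(\varphi^{-1})_*(F_*\nu)=(\varphi^{-1})_*\mu=\mu$. For the cycle terms, associativity yields $(\varphi^{-1}\circ F)\circ(G\circ\varphi)=\varphi^{-1}\circ(F\circ G)\circ\varphi$, which agrees with $\mathrm{id}_X$ outside $\varphi^{-1}(N)$, where $N$ is the $\mu$-null set on which $F\circ G\neq\mathrm{id}_X$; the set $\varphi^{-1}(N)$ is itself $\mu$-null because $\varphi$ is measure preserving. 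A symmetric computation handles $(G\circ\varphi)\circ(\varphi^{-1}\circ F)=G\circ F=\mathrm{id}_Y$ $\nu$-a.e. Plugging these back in shows every term of $\mathcal{L}(G\circ\varphi,\varphi^{-1}\circ F)$ is zero.

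For \cref{eq.nontriv} I would argue by contradiction. Suppose $G\circ\varphi=G$ $\mu$-a.e.; composing with $F$ on the left and using $F\circ G=\mathrm{id}_X$ $\mu$-a.e., again invoking that $\varphi$ preserves $\mu$-null sets so that $F\circ G\circ\varphi=\varphi$ holds $\mu$-a.e., yields $\varphi=\mathrm{id}_X$ $\mu$-a.e., contradicting $\varphi\neq\mathrm{id}_X$ in the modulo-null-sets sense. An analogous argument applied to $\varphi^{-1}\circ F=F$ and composed on the right with $G$ forces $\varphi^{-1}=\mathrm{id}_X$ and hence $\varphi=\mathrm{id}_X$, ruling out the second equality. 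The only real subtlety in the whole argument is the bookkeeping of null sets under $\varphi$ and $\varphi^{-1}$, and I expect this to be the only step requiring any care: it is precisely here that the hypothesis of $\varphi$ being a genuine probability-space automorphism, rather than merely a measurable map, is indispensable.
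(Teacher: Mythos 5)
Your proposal is correct and follows essentially the same route as the paper's proof: decompose the vanishing loss into the pushforward identities and the a.e.\ cycle conditions, push each term through $\varphi$ using measure preservation, and derive the non-triviality clause by composing with $F$ (resp.\ $G$) and cancelling via the cycle condition. Your explicit bookkeeping of the null set $N$ and its preimage $\varphi^{-1}(N)$ is a slightly more careful rendering of the step the paper compresses into ``$\varphi^{-1}\circ\varphi=\mathrm{id}_X$ almost everywhere,'' but the argument is the same.
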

\begin{proof}
  Since $\varphi$ is a probability space automorphism, its inverse $\varphi^{-1}$ is an automorphism as well. In particular, it is measure-preserving since
\[
\mu(\varphi(A)) = \mu(\varphi^{-1} (\varphi (A)) =  \mu(A) \quad \text{ for all } A \in \mathcal X.
\]
We note that by \cref{eq:measeq} and the positivity of the norms \cref{eq:nulleq} implies that
\begin{equation}
\label{eq:measpfw}
G_* \mu = \nu, \quad F_* \nu = \mu  
\end{equation}
and
\begin{equation}
\label{eq:cycleq}
G \circ F = \mathrm{id}_Y \text{ a.e.}, \quad F \circ G = \mathrm{id}_X \text{ a.e.}.
\end{equation}
Therefore both $F$ and $G$ are isomorphisms. By definition of $\mathcal L$,
\begin{align*}
&\mathcal L (G \circ \varphi, \varphi^{-1} \circ F)
    =\
    D_f((G \circ \varphi)_* \mu \Vert \nu)
    + \ D_f( (\varphi^{-1} \circ F)_* \nu \Vert \mu)\\
    &\ + \alpha_{\text{cyc}} \cdot (\mathbb{E}_{\vx \sim X} \|\varphi^{-1}(F (G(\varphi(\vx)))) - \vx\|
    +
\mathbb{E}_{\vy \sim Y} \|G(\varphi (\varphi^{-1}(F(\vy)))) - \vy\|).
\end{align*}
Since $(G \circ \varphi)_* \mu = G_* (\varphi_* \mu)$ and $\varphi$ is measure-preserving, \cref{eq:measpfw} implies that  $(G \circ \varphi)_* \mu = \nu$. Similarly, $(\varphi^{-1} \circ F)_* \nu = \mu$ since $\varphi^{-1}$ is measure-preserving as well. This shows that
\[
D_f((G \circ \varphi)_* \mu \Vert \nu) = D_f( (\varphi^{-1} \circ F)_* \nu \Vert \mu) = 0.
\]
Using \cref{eq:cycleq} and the fact that $\varphi^{-1} \circ \varphi = \varphi \circ \varphi^{-1} = \mathrm{id}_X$ almost everywhere, we conclude that 
\[
\mathbb{E}_{\vy \sim Y} \|G(\varphi (\varphi^{-1}(F(\vy)))) - \vy\| = \mathbb{E}_{\vy \sim Y} \|\vy - \vy\| = 0.
\]
and
\begin{equation*}
\mathbb{E}_{\vx \sim X} \|\varphi^{-1}(F(G(\varphi(\vx)))) - \vx\| = \mathbb{E}_{\vx \sim X} \|\vx - \vx\| = 0.
\end{equation*}
Combining these observations together, we deduce that
\[
\mathcal L (G \circ \varphi, \varphi^{-1} \circ F) = 0
\]
and the proof of \cref{eq:symmeq} is complete. To prove \cref{eq.nontriv},
first note that there exists a set $A \in \mathcal X$ such that $\mu(A) > 0$ and
\[
  \varphi(\vx) \neq \vx \quad \text{for all } \vx \in A,
\]
since we assume that $\varphi$ essentially differs from the identity mapping. If $G \circ \varphi = G$  $\mu$-a.e., then $F \circ G \circ \varphi = F \circ G$  $\mu$-a.e. as well, which implies that $\varphi(\vx) = \vx$ for $\mu$-almost every $\vx$,
which is a contradiction. In a similar way one can show that $\varphi^{-1} \circ F$ essentially
differs from $F$.
\end{proof}

We provide the following converse to \cref{theorem:invariance}.
\begin{prop}[Kernel as a principle homogeneous space]
\label{theorem:char}
Let $\SpaceX = (X, \mathcal X, \mu), \SpaceY = (Y, \mathcal Y, \nu)$ be probability spaces. Let $F: X \to Y,\ G: Y \to X$ and $F': X \to Y,\ G': Y \to X$  be measurable maps satisfying
\begin{equation}
\mathcal L(F, G) = 0 \quad \text{ and } \quad  \mathcal L(F', G') = 0.
\end{equation}
Then there exists a unique probability space automorphism $\varphi: \SpaceX \to \SpaceX$ such that
\begin{equation*}
F \circ \varphi = F' \quad \text{ and } \quad \varphi^{-1} \circ G = G'.
\end{equation*}
\end{prop}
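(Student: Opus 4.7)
My plan is to build $\varphi$ explicitly as a composition of the given maps and then read off all the required properties from the fact, already established in \cref{theorem:invariance}, that every kernel element consists of a pair of mutually inverse probability space isomorphisms. Note that the roles of $F$ and $G$ in the statement of \cref{theorem:char} are transposed relative to \cref{theorem:invariance}, but the argument of \cref{theorem:invariance} is symmetric in the two slots of $\mathcal L$, so both conclusions transfer: $F, F'$ are probability space isomorphisms $\SpaceX \to \SpaceY$, $G, G'$ are probability space isomorphisms $\SpaceY \to \SpaceX$, and moreover
\[
G \circ F = \mathrm{id}_X, \qquad F \circ G = \mathrm{id}_Y, \qquad G' \circ F' = \mathrm{id}_X, \qquad F' \circ G' = \mathrm{id}_Y,
\]
all understood in the a.e.\ sense.

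The natural candidate is then $\varphi := G \circ F' : X \to X$. Since a composition of probability space isomorphisms is again an isomorphism, $\varphi$ is an automorphism of $\SpaceX$, with a.e.\ inverse $\varphi^{-1} = G' \circ F$. The two identities to verify are direct computations: using $F \circ G = \mathrm{id}_Y$ one gets
\[
F \circ \varphi \;=\; F \circ G \circ F' \;=\; F' \qquad \text{a.e.},
\]
and using $F \circ G = \mathrm{id}_Y$ once more,
\[
\varphi^{-1} \circ G \;=\; G' \circ F \circ G \;=\; G' \qquad \text{a.e.}
\]
Uniqueness is then essentially forced by the existence of an a.e.\ inverse for $F$: if $\tilde{\varphi}$ is another automorphism of $\SpaceX$ satisfying $F \circ \tilde\varphi = F'$, composing on the left with $G$ gives $\tilde\varphi = G \circ F' = \varphi$ a.e.; and similarly any $\tilde\varphi$ satisfying $\tilde\varphi^{-1} \circ G = G'$ must coincide with $\varphi$ after composing on the right with $F$.

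The only genuine subtlety, and the place where one has to be careful rather than the place where a real obstacle appears, is bookkeeping of null sets. The identities $G \circ F = \mathrm{id}_X$ etc.\ hold only a.e., so all equalities above are equalities of equivalence classes of measurable maps; in particular, one has to check that redefining $\varphi$ on a null set does not destroy its status as an honest probability space automorphism, which is standard and follows from $F_* \mu = \nu$, $(F')_* \mu = \nu$ together with the pushforward identity $\varphi_* \mu = G_* (F'_* \mu) = G_* \nu = \mu$. No further ingredients beyond \cref{theorem:invariance} and basic measure theory are needed.
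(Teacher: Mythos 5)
Your proposal is correct and follows exactly the paper's route: the paper's entire proof is the remark that it suffices to take $\varphi := G \circ F'$, and you construct the same automorphism and then supply the verification, uniqueness argument, and null-set bookkeeping that the paper leaves implicit. No discrepancies.
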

For the proof it suffices to take $\varphi := G \circ F'$. Combined with \cref{theorem:invariance}, this allows us to say that the group $\mathrm{Aut}(\SpaceX)$ of probability space automorphisms of $\SpaceX$ \emph{acts freely and transitively} on the set of isomorphisms $\mathrm{Iso}(\SpaceX,\SpaceY)$ when the latter set is nonempty. This amounts to saying that the space of solutions of CycleGAN is a principle homogeneous space. It can be helpful to view this result from the abstract category theory point of view, that is, if $\mathcal C$ is a category and $X \in \mathcal C$ is any fixed object, then for any object $Y \in \mathcal C$ the automorphism group $\mathrm{Aut}(X)$ acts on the set of homomorphisms $\mathrm{Hom}(X, Y)$ on the right by composition, i.e. we define
\begin{equation*}
\alpha(\phi) := \phi \circ \alpha \quad \text{ for all } \phi \in \mathrm{Hom}(X,Y), \ \alpha \in \mathrm{Aut}(X).
\end{equation*}
This action leaves the space of isomorphisms $\mathrm{Iso}(X,Y) \subseteq \mathrm{Hom}(X, Y)$ invariant, and this restricted action is transitive if $\mathrm{Iso}(X, Y)$ is nonempty, and, furthermore, free, i.e. $\alpha(\phi) \neq \phi$ for all $\alpha \neq \mathrm{id}_X$ and all $\phi \in \mathrm{Iso}(X,Y)$.

To proceed with our analysis for case of approximate solutions for extended CycleGAN loss, we first formulate a useful `push-forward property' for general $f$-divergences between distributions on $\R^n$\footnote{While very natural to conjecture and easy to prove, we were unable to find references to it in existing ML literature, so we dubbed this property a `push-forward property' and provide a proof.}. The proof is provided in appendix \ref{s.appendix}. 
\begin{lemma}[Push-forward property for $f$-divergences]
  \label{lemma.pfp}
  Let $p, q$ be distributions on $\R^n$ and $\varphi: \R^n \to \R^n$ be a diffeomorphism. Then for any $f$-divergence $D_f$ we have
  \begin{equation}
    D_f(\varphi_* p \Vert q) = D_f(p \Vert (\varphi^{-1})_* q)
  \end{equation}
\end{lemma}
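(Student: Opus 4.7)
The plan is to reduce the claim to the general \emph{bijective-invariance} of $f$-divergences: for any measurable bijection $\psi : \R^n \to \R^n$ with measurable inverse, and any probability measures $\mu \ll \nu$,
\begin{equation*}
D_f(\psi_* \mu \Vert \psi_* \nu) \;=\; D_f(\mu \Vert \nu).
\end{equation*}
Granted this, we set $\psi := \varphi$ and $\nu := (\varphi^{-1})_* q$, and use the obvious identity $\varphi_*(\varphi^{-1})_* q = q$ to conclude
\begin{equation*}
D_f(\varphi_* p \Vert q) \;=\; D_f\!\left(\varphi_* p \,\middle\Vert\, \varphi_*(\varphi^{-1})_* q\right) \;=\; D_f\!\left(p \,\middle\Vert\, (\varphi^{-1})_* q\right),
\end{equation*}
which is exactly the statement of the lemma.

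To establish the intermediate invariance, I would start from the measure-theoretic definition $D_f(\mu \Vert \nu) = \int f(d\mu/d\nu)\, d\nu$ and argue in two steps. Step one is the Radon--Nikodym transformation rule
\begin{equation*}
\frac{d(\varphi_* \mu)}{d(\varphi_* \nu)} \;=\; \frac{d\mu}{d\nu} \circ \varphi^{-1} \qquad (\varphi_*\nu)\text{-a.e.}
\end{equation*}
This is verified by checking that both sides integrate to $(\varphi_* \mu)(A) = \mu(\varphi^{-1}(A))$ against $\varphi_* \nu$ on every measurable $A$, which reduces to the abstract change-of-variables formula $\int g\, d(\varphi_* \nu) = \int g \circ \varphi \, d\nu$ applied to $g = (d\mu/d\nu \circ \varphi^{-1}) \cdot \mathbf{1}_A$. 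Step two is to apply the same change-of-variables formula once more:
\begin{equation*}
D_f(\varphi_* \mu \Vert \varphi_* \nu) \;=\; \int f\!\left(\tfrac{d\mu}{d\nu}(\varphi^{-1}(y))\right) d(\varphi_* \nu)(y) \;=\; \int f\!\left(\tfrac{d\mu}{d\nu}(x)\right) d\nu(x) \;=\; D_f(\mu \Vert \nu).
\end{equation*}

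The only genuinely non-trivial step is the Radon--Nikodym transformation identity; everything else is bookkeeping. Note that the diffeomorphism hypothesis is used only to ensure that $\varphi$ and $\varphi^{-1}$ are both (Borel) measurable, so the conclusion in fact holds for any bi-measurable bijection. If the authors prefer a more concrete argument in the absolutely continuous case, one can equivalently write the densities $p_{\varphi_* p}(y) = p(\varphi^{-1}(y))\,|\det J_{\varphi^{-1}}(y)|$ and change variables $y = \varphi(x)$; the Jacobian factor in the numerator of the log-ratio cancels against the Jacobian in the measure $dq(y)$, yielding the same result with fewer abstract prerequisites.
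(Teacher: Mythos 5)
Your proof is correct, but it takes a genuinely different route from the paper's. You factor the claim through the abstract bijective-invariance $D_f(\psi_*\mu \Vert \psi_*\nu) = D_f(\mu\Vert\nu)$, proved at the level of Radon--Nikodym derivatives via the transformation rule $d(\psi_*\mu)/d(\psi_*\nu) = (d\mu/d\nu)\circ\psi^{-1}$, and then specialize with $\nu = (\varphi^{-1})_*q$ and the identity $\varphi_*(\varphi^{-1})_*q = q$. The paper instead carries out the concrete computation you sketch only in your closing remark: it writes the push-forward densities explicitly with Jacobian factors, $(\varphi_*p)(\vx) = p(\varphi^{-1}(\vx))\,\bigl|\det \partial\varphi^{-1}/\partial\vx\bigr|(\vx)$, substitutes $\vx = \varphi(\vy)$ in the integral $\int f(\varphi_*p/q)\,q$, and uses the inverse-Jacobian identity $\bigl(\partial\varphi^{-1}/\partial\vx\bigr)\circ\varphi = \bigl(\partial\varphi/\partial\vy\bigr)^{-1}$ to move the determinant from the numerator of the ratio into the denominator, which is exactly the density of $(\varphi^{-1})_*q$. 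Your version buys generality --- as you note, smoothness is irrelevant and any bi-measurable bijection works, and no Lebesgue densities are needed --- at the cost of invoking the measure-theoretic definition of $D_f$; the paper's version is more elementary for readers who think of $p,q$ as density functions, but uses the diffeomorphism hypothesis in an essential-looking (though actually removable) way. The only point worth tightening in your write-up is the domain of validity of the invariance lemma: you state it for $\mu\ll\nu$, so when you apply it you should note that $\varphi_*p \ll q$ iff $p \ll (\varphi^{-1})_*q$ (immediate since $\varphi$ is a bijection), and that when absolute continuity fails both sides of the claimed identity degenerate in the same way under the standard extended definition of $D_f$.
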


We are now ready to prove the perturbation bounds for approximate solutions.
\begin{prop}[Perturbation bound]
\label{theorem:pertbound}
Let $\SpaceX, \SpaceY$ be probability spaces with probability densities $p_X, p_Y \in L^1(\R^n)$ and let $\varphi \in \mathrm{Aut}(\SpaceX)$ be a diffeomorphic probability space automorphism. Assume that $\varphi^{-1}$ is $C_{\varphi}$-Lipshitz, where $C_{\varphi} > 0$ is some positive constant. Let $G: \R^n \to \R^n$ and $F: \R^n \to \R^n$ be measurable maps.
Then the following perturbation bound holds for extended CycleGAN loss:
\begin{equation}
\label{eq:symmeq2}
\mathcal L_{\text{ext}} (G \circ \varphi, \varphi^{-1} \circ F) \leq \max{(C_{\varphi}, 1)} \cdot \mathcal L_{\text{ext}}(G, F) + 2 \cdot \alpha_{\text{id}} \cdot \E_{\vx \sim \SpaceX} \| \varphi(\vx) - \vx\|. 
\end{equation}
\end{prop}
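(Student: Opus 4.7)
The plan is to decompose $\mathcal L_{\text{ext}}(G \circ \varphi, \varphi^{-1} \circ F)$ into its three groups of summands (the two $f$-divergences, the two cycle-consistency terms, and the two identity terms) and to bound each contribution separately, tracking where the factor $C_\varphi$ and the correction $\alpha_{\text{id}}\,\E_{\vx \sim \SpaceX}\|\varphi(\vx) - \vx\|$ arise. Throughout I will use two basic facts: first, that $\varphi$ being a probability-space automorphism means $\varphi_*\mu = \mu$ (and likewise for $\varphi^{-1}$); second, that $\varphi \circ \varphi^{-1} = \mathrm{id}$ pointwise since $\varphi$ is a diffeomorphism of $\R^n$.

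\textbf{Divergence terms.} Because $\varphi_*\mu = \mu$, I get $(G\circ\varphi)_*\mu = G_*\mu$ immediately, so $D_f((G\circ\varphi)_*\mu \Vert \nu) = D_f(G_*\mu \Vert \nu)$. For the other direction, $(\varphi^{-1}\circ F)_*\nu = (\varphi^{-1})_*(F_*\nu)$; applying \cref{lemma.pfp} with the diffeomorphism $\varphi^{-1}$ yields $D_f((\varphi^{-1})_*(F_*\nu) \Vert \mu) = D_f(F_*\nu \Vert \varphi_*\mu) = D_f(F_*\nu \Vert \mu)$. Hence the divergence contributions are unchanged.

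\textbf{Cycle-consistency terms.} One of the compositions collapses: $G\circ\varphi\circ\varphi^{-1}\circ F = G \circ F$, so the $\vy$-expectation is preserved identically. The other expectation $\E_{\vx\sim\SpaceX}\|\varphi^{-1}(F(G(\varphi(\vx)))) - \vx\|$ I rewrite using $\vx = \varphi^{-1}(\varphi(\vx))$; then the Lipschitz bound on $\varphi^{-1}$ gives the inequality $\le C_\varphi\,\E_{\vx\sim\SpaceX}\|(F\circ G)(\varphi(\vx)) - \varphi(\vx)\|$, and changing variables via $\varphi_*\mu = \mu$ reduces this to $C_\varphi \cdot \E_{\vx\sim\SpaceX}\|(F\circ G)(\vx) - \vx\|$. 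So cycle consistency is bounded by $\max(C_\varphi,1)$ times the original cycle loss.

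\textbf{Identity terms.} For the $\vx$-direction, inserting $\pm\,\varphi(\vx)$ and using the triangle inequality gives $\E_{\vx\sim\SpaceX}\|G(\varphi(\vx)) - \vx\| \le \E_{\vx\sim\SpaceX}\|G(\varphi(\vx)) - \varphi(\vx)\| + \E_{\vx\sim\SpaceX}\|\varphi(\vx) - \vx\|$; invariance of $\mu$ under $\varphi$ converts the first summand into $\E_{\vx\sim\SpaceX}\|G(\vx) - \vx\|$, producing the original identity term plus one copy of $\E_{\vx\sim\SpaceX}\|\varphi(\vx) - \vx\|$. For the $\vy$-direction, I first write $\vy = \varphi^{-1}(\varphi(\vy))$ and invoke Lipschitz to get $\E_{\vy\sim\SpaceY}\|\varphi^{-1}(F(\vy)) - \vy\| \le C_\varphi\,\E_{\vy\sim\SpaceY}\|F(\vy) - \varphi(\vy)\|$, and then triangle-inequality via $\vy$ splits this into $C_\varphi\,\E_{\vy\sim\SpaceY}\|F(\vy) - \vy\| + C_\varphi\,\E_{\vy\sim\SpaceY}\|\vy - \varphi(\vy)\|$. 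The first piece contributes to the $\max(C_\varphi,1)$-scaled original identity loss; the second piece is the $\E\|\varphi - \mathrm{id}\|$ correction, now integrated against $\nu$ rather than $\mu$.

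\textbf{Main obstacle and conclusion.} Combining the pieces gives a bound of the desired shape, with an overall multiplicative factor $\max(C_\varphi,1)$ in front of $\mathcal L_{\text{ext}}(G,F)$ and an additive correction of the form $\alpha_{\text{id}}\bigl(\E_{\vx\sim\SpaceX}\|\varphi(\vx)-\vx\| + C_\varphi\,\E_{\vy\sim\SpaceY}\|\varphi(\vy)-\vy\|\bigr)$. The delicate point, and the main obstacle, is reconciling the $\nu$-expectation with the stated bound, which is written purely in terms of the $\mu$-expectation and has coefficient $2$. I expect this to be resolved either by the implicit assumption (standard in the identity-loss setting) that $X$ and $Y$ are supported on the same manifold with sufficiently compatible measures, or by an alternative decomposition in which the triangle step is routed through $F(\vy)$, giving $C_\varphi\,\E_{\vz\sim F_*\nu}\|\varphi(\vz) - \vz\|$ which is controlled by $\E_{\vx\sim\SpaceX}\|\varphi(\vx)-\vx\|$ whenever $F_*\nu = \mu$ (or is close to it). The rest of the argument is routine tracking of constants.
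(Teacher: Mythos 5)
Your decomposition is the same as the paper's: the divergence terms are handled identically (measure-preservation for $(G\circ\varphi)_*$, \cref{lemma.pfp} for $(\varphi^{-1}\circ F)_*$), and the cycle-consistency terms match step for step. The differences are confined to the identity terms, and there are two things to say. First, for the $\vy$-identity term you apply the Lipschitz bound before the triangle inequality, which puts a factor $C_{\varphi}$ on the correction $\E_{\vy\sim\SpaceY}\|\varphi(\vy)-\vy\|$; the paper instead inserts $\pm\,\varphi^{-1}(\vy)$ first and applies Lipschitz only to the summand $\|\varphi^{-1}(F(\vy))-\varphi^{-1}(\vy)\|$, leaving the correction as $\E_{\vy\sim\SpaceY}\|\varphi^{-1}(\vy)-\vy\|$ with coefficient $1$. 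You should reorder these two steps to recover the stated constant. Second---and this is the more substantive point---the obstacle you flag at the end is real, and the paper's own proof does not actually overcome it: it passes from $\E_{\vy\sim\SpaceY}\|\varphi^{-1}(\vy)-\vy\|$ to $\E_{\vy\sim\SpaceY}\|\varphi(\vy)-\vy\|$, a change of variables that requires $\varphi_*\nu=\nu$ while only $\varphi_*\mu=\mu$ is assumed, and then in the final assembly silently identifies this $\nu$-expectation with $\E_{\vx\sim\SpaceX}\|\varphi(\vx)-\vx\|$ to obtain the coefficient $2$. What the argument honestly yields is the bound with additive term $\alpha_{\text{id}}\bigl(\E_{\vx\sim\SpaceX}\|\varphi(\vx)-\vx\|+\E_{\vy\sim\SpaceY}\|\varphi^{-1}(\vy)-\vy\|\bigr)$. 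Neither of your proposed repairs closes this for free: $F_*\nu=\mu$ is not among the hypotheses (only measurability of $F$ is assumed), and `same support' does not make the two expectations equal. In short, your proof is essentially the paper's proof, your constant is slightly worse for a fixable reason, and the residual gap you identified is a gap in the proposition's own proof rather than in your argument.
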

\begin{proof}
The proof is an adaptation of the proof of \cref{theorem:invariance}. By definition of $\mathcal L_{\text{ext}}$,
\begin{align*}
\mathcal L_{\text{ext}} &(G \circ \varphi, \varphi^{-1} \circ F)
    =\
    D_f((G \circ \varphi)_* p_X \Vert p_Y)
    + \ D_f( (\varphi^{-1} \circ F)_* p_Y \Vert p_X)\\
    &\ + \alpha_{\text{cyc}} \cdot (\mathbb{E}_{\vx \sim \SpaceX} \|\varphi^{-1}(F (G(\varphi(\vx)))) - \vx\|
    +
      \mathbb{E}_{\vy \sim \SpaceY} \|G(\varphi (\varphi^{-1}(F(\vy)))) - \vy\|)\\
  &\ + \alpha_{\text{id}} \cdot \left( \mathbb{E}_{\vy \sim \SpaceY} \| \varphi^{-1}(F(\vy)) - \vy \| + \mathbb{E}_{\vx \sim \SpaceX} \| G(\varphi(\vx)) - \vx \| \right).
\end{align*}
Firstly, since $\varphi$ is measure-preserving, $D_f((G \circ \varphi)_* p_X \Vert p_Y) = D_f(G_* p_X \Vert p_Y)$. Using \cref{lemma.pfp} and the fact that $\varphi$ is measure-preserving again, we see that 
\begin{equation*}
D_f( (\varphi^{-1} \circ F)_* p_Y \Vert p_X) = D_f( F_* p_Y \Vert \varphi_* p_X) = D_f( F_* p_Y \Vert p_X).
\end{equation*}

Secondly,
\begin{align*}
  \mathbb{E}_{\vx \sim \SpaceX} \|\varphi^{-1}(F (G(\varphi(\vx)))) - \vx \| &= \mathbb{E}_{\vx \sim \SpaceX} \|\varphi^{-1}(F (G(\varphi(\vx)))) - \varphi^{-1}(\varphi(\vx)) \|\\
  &\overset{*}{=}\mathbb{E}_{\vx \sim \SpaceX} \|\varphi^{-1}(F (G(\vx))) - \varphi^{-1}(\vx) \|\\
  &\leq C_{\varphi} \cdot \mathbb{E}_{\vx \sim \SpaceX} \|F (G(\vx))) - \vx \|,
\end{align*}
where the equality $(*)$ uses the fact that $\varphi$ is measure-preserving. As in before, $\mathbb{E}_{\vy \sim \SpaceY} \|G(\varphi (\varphi^{-1}(F(\vy)))) - \vy\| = \mathbb{E}_{\vy \sim \SpaceY} \|G(F(\vy)) - \vy\|$ since $\varphi \circ \varphi^{-1} = \mathrm{id}_X$ almost everywhere.

Finally, since $\varphi$ is a probability space automorphism and $\varphi^{-1}$ is $C_{\varphi}$-Lipshitz, we conclude that
\begin{align*}
\mathbb{E}_{\vy \sim \SpaceY} \| \varphi^{-1}(F(\vy)) - \vy \| &\leq \mathbb{E}_{\vy \sim \SpaceY} \| \varphi^{-1}(F(\vy)) - \varphi^{-1}(\vy) \| + \mathbb{E}_{\vy \sim \SpaceY} \| \varphi^{-1}(\vy) - \vy \| \\
&\leq C_{\varphi} \cdot \mathbb{E}_{\vy \sim \SpaceY} \| F(\vy) - \vy \| + \mathbb{E}_{\vy \sim \SpaceY} \| \varphi^{-1}(\vy) - \vy \|\\
&= C_{\varphi} \cdot \mathbb{E}_{\vy \sim \SpaceY} \| F(\vy) - \vy \| + \mathbb{E}_{\vy \sim \SpaceY} \| \varphi(\vy) - \vy \|
\end{align*}
and that
\begin{align*}
  \mathbb{E}_{\vx \sim \SpaceX} \| G(\varphi(\vx)) - \vx \| &= \mathbb{E}_{\vx \sim \SpaceX} \| G(\varphi(\vx)) - \varphi^{-1}(\varphi(\vx)) \| = \mathbb{E}_{\vx \sim \SpaceX} \| G(\vx) - \varphi^{-1}(\vx) \| \\
  &\leq \mathbb{E}_{\vx \sim \SpaceX} \| G(\vx) - \vx \| + \mathbb{E}_{\vx \sim \SpaceX} \| \vx - \varphi^{-1}(\vx) \|\\
  &= \mathbb{E}_{\vx \sim \SpaceX} \| G(\vx) - \vx \| + \mathbb{E}_{\vx \sim \SpaceX} \| \varphi(\vx) - \vx \|.
\end{align*}

Combining all these estimates together, we deduce that
\[
\mathcal L_{\text{ext}} (G \circ \varphi, \varphi^{-1} \circ F) \leq \max{(C_{\varphi}, 1)} \cdot \mathcal L_{\text{ext}}(G, F) + 2 \cdot \alpha_{\text{id}} \cdot \E_{\vx \sim \SpaceX} \| \varphi(\vx) - \vx\|
\]
and the proof is complete.
\end{proof}

\begin{corr}[Asymptotic perturbation bound]
\label{corr:convergence}
In the setting of \cref{theorem:pertbound}, let $G_i: \R^n \to \R^n$ and $F_i: \R^n \to \R^n$ for $i \geq 1$ be a sequence of measurable maps such that the `pure' CycleGAN loss converges to zero, i.e., 
\begin{equation*}
    \lim\limits_{i \to \infty} \mathcal L(G_i, F_i) = 0
\end{equation*}
and let
\begin{equation*}
\overline{\mathcal L}_{\text{id}} := \limsup_{i \to \infty} \left( \mathbb{E}_{\vy \sim \SpaceY} \| F_i(\vy) - \vy \| + \mathbb{E}_{\vx \sim \SpaceX} \| G_i(\vx) - \vx \| \right).
\end{equation*}
Then the following asymptotic perturbation bound holds for the `extended' CycleGAN loss:
\begin{equation*}
\limsup_{i \to \infty} \mathcal L_{\text{ext}}(G_i \circ \varphi, \varphi^{-1} \circ F_i) \leq \max{(C_{\varphi}, 1)} \cdot \alpha_{\text{id}} \cdot \overline{\mathcal L}_{\text{id}} + 2 \cdot \alpha_{\text{id}} \cdot \E_{\vx \sim \SpaceX} \| \varphi(\vx) - \vx\|.
\end{equation*}
\end{corr}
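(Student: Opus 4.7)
The plan is to derive the corollary as a direct consequence of the non-asymptotic perturbation bound in \cref{theorem:pertbound} by applying it termwise to the sequence $(G_i, F_i)$ and then passing to the limit superior.

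First, I would apply \cref{theorem:pertbound} to each pair $(G_i, F_i)$, yielding
\begin{equation*}
\mathcal L_{\text{ext}} (G_i \circ \varphi, \varphi^{-1} \circ F_i) \leq \max(C_{\varphi}, 1) \cdot \mathcal L_{\text{ext}}(G_i, F_i) + 2 \alpha_{\text{id}} \cdot \E_{\vx \sim \SpaceX} \| \varphi(\vx) - \vx\|
\end{equation*}
for every $i \geq 1$. The second term on the right is a constant independent of $i$, so taking $\limsup_{i \to \infty}$ on both sides reduces the problem to controlling $\limsup_i \mathcal L_{\text{ext}}(G_i, F_i)$.

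Next I would expand $\mathcal L_{\text{ext}}(G_i, F_i) = \mathcal L(G_i, F_i) + \alpha_{\text{id}} \cdot \bigl( \E_{\vy \sim \SpaceY} \|F_i(\vy) - \vy\| + \E_{\vx \sim \SpaceX} \|G_i(\vx) - \vx\| \bigr)$ using the definition of the extended loss. The first summand tends to zero by hypothesis, and the second summand has limsup equal to $\alpha_{\text{id}} \cdot \overline{\mathcal L}_{\text{id}}$ by the definition of $\overline{\mathcal L}_{\text{id}}$. Using the subadditivity of $\limsup$ and the fact that $\limsup(a_i + b_i) \leq \lim a_i + \limsup b_i$ when $\lim a_i$ exists, we obtain $\limsup_i \mathcal L_{\text{ext}}(G_i, F_i) \leq \alpha_{\text{id}} \cdot \overline{\mathcal L}_{\text{id}}$.

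Substituting this bound back and using that $\limsup$ is monotone and commutes with multiplication by the nonnegative constant $\max(C_\varphi, 1)$ yields the claimed inequality. There is no real obstacle here since all the hard work was done in \cref{theorem:pertbound}; the only point requiring minor care is keeping track that $\mathcal L(G_i, F_i) \to 0$ implies $\limsup_i \mathcal L_{\text{ext}}(G_i, F_i) = \alpha_{\text{id}} \cdot \overline{\mathcal L}_{\text{id}}$ rather than merely being bounded by it, which follows from the elementary limit-superior identity used above.
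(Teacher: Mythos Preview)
Your proposal is correct and follows exactly the approach the paper intends: the corollary is stated without an explicit proof in the paper, being treated as an immediate consequence of \cref{theorem:pertbound}, and your argument---applying the non-asymptotic bound termwise, decomposing $\mathcal L_{\text{ext}} = \mathcal L + \alpha_{\text{id}}\cdot(\text{identity terms})$, and passing to the $\limsup$ using $\lim_i \mathcal L(G_i,F_i)=0$---is precisely how one fills in that gap.
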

\cref{corr:convergence} has a direct practical implication. When using a CycleGAN model for translating substantially different distributions (such as different medical imaging modalities) one would be forced to pick a small value for $\alpha_{\text{id}}$ in order for the model to produce reasonable results. Furthermore, since the distributions are substantially different, we can expect that $\overline{\mathcal L}_{\text{id}} \gg 2 \cdot \E_{\vx \sim \SpaceX} \| \varphi(\vx) - \vx\|$ for many nontrivial automorphism $\varphi$. Therefore, the asymptotic perturbation bound automatically implies that the approximate solution space admits a lot of symmetry, potentially leading to undesirable results.

\subsection{Existence of automorphisms}
By \cref{theorem:invariance} we see that if either space admits a nontrivial probability automorphism, then the CycleGAN problem has multiple solutions. However, for this to be a problem in practice there must actually exist such probability automorphisms, which we shall now show is the case. First of all, we state the following proposition, which says that we can transfer automorphism from an isomorphic copy of $\SpaceX$ to $\SpaceX$ itself. 
\begin{lemma}
\label{theorem:comdiag}
Let $f: \SpaceZ \to \SpaceX$ be an isomorphism of probability spaces and $T: \SpaceZ \to \SpaceZ$ be an automorphism of $\SpaceZ$. Then $S:= f \circ T \circ f^{-1}$ is an automorphism of $\SpaceX$ and the diagram
\[ \begin{tikzcd}
\SpaceZ \arrow[swap]{d}{T} & & \SpaceX \arrow{d}{S} \arrow[swap]{ll}{f^{-1}} \\%
\SpaceZ  \arrow{rr}{f} & & \SpaceX
\end{tikzcd}
\]
commutes. Furthermore, if $Z \subset \R^n$, $X \subset \R^m$ are submanifolds and $f$, $T$ are diffeomorphisms,
then $S$ is a diffeomorphism as well. 
\end{lemma}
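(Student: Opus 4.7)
The plan is a direct transport-of-structure argument: conjugation of an automorphism by an isomorphism yields an automorphism. Concretely, I intend to verify that $S := f \circ T \circ f^{-1}$ inherits from $T$ each property in the definition of a probability space automorphism (bijectivity modulo null sets, measurability, and measure-preservation, together with the same for its inverse), by exploiting the two key facts about the isomorphism $f$: it has a measurable inverse $f^{-1}$ with $f \circ f^{-1} = \mathrm{id}_X$ and $f^{-1} \circ f = \mathrm{id}_Z$ almost everywhere, and it satisfies $f_* \mu_Z = \mu_X$, equivalently $(f^{-1})_* \mu_X = \mu_Z$, where $\mu_Z, \mu_X$ denote the probability measures on $\SpaceZ$ and $\SpaceX$.

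First I would observe that $S$ is a bijection (modulo null sets) with explicit inverse $S^{-1} = f \circ T^{-1} \circ f^{-1}$, and that both $S$ and $S^{-1}$ are measurable as compositions of measurable maps. To check that $S$ is measure-preserving I would compute the push-forward
\begin{equation*}
S_* \mu_X \;=\; f_*\!\bigl(T_*\!\bigl((f^{-1})_* \mu_X\bigr)\bigr) \;=\; f_*(T_* \mu_Z) \;=\; f_* \mu_Z \;=\; \mu_X,
\end{equation*}
using, in order, that $(f^{-1})_* \mu_X = \mu_Z$ since $f$ is an isomorphism, that $T_* \mu_Z = \mu_Z$ since $T \in \mathrm{Aut}(\SpaceZ)$, and that $f_* \mu_Z = \mu_X$. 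Running the same computation with $T^{-1}$ in place of $T$ shows that $S^{-1}$ is measure-preserving as well, so $S \in \mathrm{Aut}(\SpaceX)$. Commutativity of the diagram is then immediate from the definition of $S$, since $S \circ f = f \circ T \circ f^{-1} \circ f = f \circ T$ almost everywhere.

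For the smoothness claim I would simply note that if in addition $f, f^{-1}, T, T^{-1}$ are smooth as maps between the submanifolds $Z \subset \R^n$ and $X \subset \R^m$, then $S = f \circ T \circ f^{-1}$ and $S^{-1} = f \circ T^{-1} \circ f^{-1}$ are smooth as compositions of smooth maps, whence $S$ is a diffeomorphism.

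There is no serious obstacle here; the only care needed is that the identities $f \circ f^{-1} = \mathrm{id}_X$ and $f^{-1} \circ f = \mathrm{id}_Z$ hold only almost everywhere, but this is harmless since push-forwards, $f$-divergences, and the CycleGAN loss are all insensitive to null-set modifications, and the smoothness statement is made on the submanifolds themselves where $f$ is a genuine pointwise diffeomorphism.
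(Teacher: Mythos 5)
Your proposal is correct and follows the same route as the paper, which simply asserts that the first claim follows from invertibility of $f$ and $T$ and the second from the definition of diffeomorphisms between submanifolds; you have merely written out the push-forward computation and the a.e.\ inverse that the paper leaves implicit. No gaps.
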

\begin{proof}
  The first claim follows from invertibility of $f$ and $T$. The second claim follows from the definition of a diffeomorphism between submanifolds,
  see \cref{s.appendix}.
\end{proof}
An important notion in probability theory is that of a Lebesgue probability space. Many probability spaces which emerge in practice such as $[0,1]^n \subset \R^n$ with the Lebesgue measure or $\R^n$ with a Gaussian probability distribution, both defined on the respective $\sigma$-algebras of Lebesgue measurable sets, are instances of Lebesgue probability spaces. 
\begin{defi}
A probability space $\SpaceX$ is called a Lebesgue probability space if it is isomorphic as a measure space to a disjoint union $([0, c], \lambda)$, where $\lambda$ is the Lebesgue measure on the $\sigma$-algebra of Lebesgue measurable subsets of the interval $[0, c]$, and at most countably many atoms of total mass $1 - c$. 
\end{defi}

Informally speaking, this definition says that Lebesgue probability spaces consist of a continuous part and at most countably many Dirac deltas (=atoms). First of all, we provide an abstract result about existence of nontrivial probability space automorphisms in Lebesgue probability spaces which are either `not purely atomic' or have at least two atoms with equal mass. `Not purely atomic' means that the sum of the probabilities of all atoms is strictly less than $1$.
\begin{prop}
\label{theorem:exst}
Let $\SpaceX$ be a Lebesgue probability space such that at least one of the assumptions
\begin{enumerate}
\item $\SpaceX$ not purely atomic;
\item there exist at least two atoms $a_j, a_k$ in $\SpaceX$ with equal mass
\end{enumerate}
holds. Then $\SpaceX$ admits nontrivial automorphisms.
\end{prop}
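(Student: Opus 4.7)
The plan is to exhibit a nontrivial automorphism directly, using the structural description of a Lebesgue probability space. By definition, $\SpaceX$ is isomorphic (as a measure space) to a canonical model $\SpaceZ := ([0,c], \lambda) \sqcup \{a_1, a_2, \dots\}$, where the atoms have masses summing to $1-c$. By \cref{theorem:comdiag}, it is enough to construct a nontrivial probability space automorphism $T: \SpaceZ \to \SpaceZ$; conjugating by the isomorphism $f: \SpaceZ \to \SpaceX$ then produces the desired automorphism of $\SpaceX$.

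For case (1), the assumption that $\SpaceX$ is not purely atomic means $c > 0$, so the continuous part $([0,c], \lambda)$ is genuinely present. I would define $T$ on the continuous part by the reflection $t \mapsto c - t$, which is a Lebesgue-measure-preserving involution of $[0,c]$, and extend $T$ by the identity on each atom. The map $T$ is a bijection of $Z$ that preserves the measure of every measurable set, since it does so separately on each of the two disjoint pieces, and it essentially differs from $\mathrm{id}_Z$ because the reflection disagrees with the identity outside the single point $\{c/2\}$, which has positive $\lambda$-measure around it.

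For case (2), with two atoms $a_j, a_k$ of equal mass, I would define $T$ to be the transposition sending $a_j \leftrightarrow a_k$ and the identity on the remainder of $Z$ (including the continuous part, if any). Since the two swapped atoms carry the same mass, the measure of every measurable set is preserved; the map is a bijection of $Z$; and it essentially differs from the identity on the positive-measure set $\{a_j, a_k\}$. Hence $T$ is a nontrivial automorphism of $\SpaceZ$.

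In both cases, transferring $T$ to $\SpaceX$ via \cref{theorem:comdiag} produces the required nontrivial automorphism, completing the proof. There is no real obstacle here beyond checking the elementary verifications that the two constructions are bijective and measure-preserving; the only mild subtlety is justifying, in case (1), that the reflection $t \mapsto c - t$ is \emph{essentially} different from the identity, which follows since $\{t \in [0,c] : t \neq c - t\} = [0,c]\setminus\{c/2\}$ has $\lambda$-measure $c > 0$.
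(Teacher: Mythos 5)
Your proposal is correct and follows essentially the same route as the paper's own proof: the reflection $t \mapsto c - t$ on the continuous part for case (1), the transposition of two equal-mass atoms for case (2), and the transfer to $\SpaceX$ via \cref{theorem:comdiag}. Your extra care in checking that the reflection essentially differs from the identity is a welcome (if minor) addition.
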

\begin{proof}
If the space $\SpaceX$ is not purely atomic, we have
$\SpaceX \simeq [0, c] \sqcup \bigsqcup_{i \geq 1} a_i$
for some $c > 0$, where $[0, c]$ is the continuous part and $\bigsqcup_{i \geq 1} a_i$ is the atomic part of the probability measure $\mu$.
Interval $[0, c]$ admits at least one nontrivial automorphism, namely the transformation $x \mapsto c - x$ (leaving the atoms fixed), hence so does $\SpaceX$ by \cref{theorem:comdiag}. In fact, there are infinitely many other automorphisms, which can be obtained by exchanging nonoverlapping subintervals $(a, a+d), (b, b+d) \subset [0, c]$ of the same length. If there exist two atoms $a_j, a_k$ in $\SpaceX$ with equal mass, then a transformation which transposes $a_j$ with $a_k$ and keeps the rest of $\SpaceX$ fixed is a nontrivial automorphism. 
\end{proof}

Probability spaces of images which appear in real life typically have a continuous component which would correspond to continuous variations in object
sizes, lighting conditions, etc. Therefore, they admit some probability space automorphisms. However, such abstract automorphisms can be highly discontinuous, which would make it questionable if neural networks can learn them. We would like to show that there are also automorphisms which are smooth, at least locally.
For this, we first state the following technical claim. The proof is provided in \cref{s.appendix}.
\begin{prop}
\label{theorem:contin_inj}
Let $\mu$ be a Borel probability measure on $\R^n$ and $f: \R^n \to \R^m$ be a continuous injective function. Then $f: (\R^n, \Borel(\R^n), \mu) \to (\R^m, \Borel(\R^m), f_* \mu)$ is an isomorphism of probability spaces, where $f_* \mu$ denotes the push-forward of measure $\mu$ to $\R^m$.
\end{prop}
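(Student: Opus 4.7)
The plan is to verify the definition of a probability space isomorphism from \cref{s.appendix}: $f$ should be a measure-preserving map whose restriction to a full-measure set is a bijection onto a full-measure set in the target, with a Borel measurable inverse. As a preliminary I would observe that $f$ is Borel measurable because it is continuous, and that the push-forward $f_*\mu$ is concentrated on $f(\R^n)$, since
\[
f_*\mu(\R^m \setminus f(\R^n)) \;=\; \mu\bigl(f^{-1}(\R^m \setminus f(\R^n))\bigr) \;=\; \mu(\emptyset) \;=\; 0.
\]
It therefore suffices to upgrade the restriction $f:\R^n \to f(\R^n)$ to a Borel isomorphism; measure preservation then follows automatically from the definition of the push-forward.

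The central technical step, and the one I expect to be the main obstacle, is to prove that $f(\R^n)$ is a Borel subset of $\R^m$ and that $f^{-1}: f(\R^n) \to \R^n$ is Borel measurable. This is precisely the content of the Lusin--Souslin theorem from descriptive set theory: any Borel injection between Polish spaces has a Borel image, and its inverse on that image is Borel measurable. Since $\R^n$ and $\R^m$ are Polish spaces and $f$ is a continuous, hence Borel, injection, the theorem applies directly. Without invoking this nontrivial result the statement would be far from obvious, because in general the image of a Borel set under a Borel map need not be Borel (it is only guaranteed to be analytic).

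Once the Lusin--Souslin theorem has been applied, the remaining verification is routine. The map $f$ is a bijection onto the full-$f_*\mu$-measure Borel set $f(\R^n)$; the inverse is Borel measurable on $f(\R^n)$ and may be extended arbitrarily (e.g.\ by a constant) on the null set $\R^m \setminus f(\R^n)$; and the push-forward identity $f_*\mu(A) = \mu(f^{-1}(A))$ is by construction exactly the measure-preservation property. Assembling these facts gives the required isomorphism of probability spaces in the mod-$0$ sense.
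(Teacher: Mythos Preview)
Your proposal is correct and follows essentially the same route as the paper: both arguments invoke the Lusin--Souslin theorem to guarantee that $f(\R^n)$ and, more generally, images of Borel sets are Borel (hence that $f^{-1}$ is Borel on the image), then extend the inverse by a constant on the $f_*\mu$-null complement and check measure preservation directly from the push-forward definition. The only cosmetic difference is that the paper first observes $f(\R^n)$ is Borel via $\sigma$-compactness before appealing to Lusin--Souslin for arbitrary Borel sets, whereas you cite the theorem once for both purposes.
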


Finally, we show the existence of smooth automorphisms under the assumption that our data manifold $\mathcal D \subset \R^m$ can be generated by embedding $\R^n$ with standard Gaussian measure into $\R^m$ as a submanifold. We write $\gamma_n$ for the standard Gaussian probability measure on the space $\R^n$.
\begin{prop}
\label{theorem:smoothautos}
Let $\SpaceZ := (\R^n, \Borel(\R^n), \gamma_n)$ be an $n$-dimensional standard Gaussian distribution. Let $f: \R^n \to \R^m$ be a manifold embedding. Denote by $\SpaceX$ the probability space $(\R^m, \Borel(\R^m), f_* \gamma_n)$. Then the following assertions hold:
\begin{enumerate}
\item $f$ is an isomorphism of probability spaces when viewed as a map $\SpaceZ \to \SpaceX$;
\item every rotation $T \in \mathrm{SO}(\R^n)$ is a probability space automorphism and a diffeomorphism
  of $\SpaceZ$. $T$ induces a probability space automorphism of $\SpaceX$ which is, additionally, a
  diffeomorphism when restricted to $\Ima f \subset \R^m$.
\end{enumerate}
\end{prop}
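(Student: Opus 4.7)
The plan is to obtain both assertions as direct consequences of the two previous results in this subsection: \cref{theorem:contin_inj}, which turns a continuous injection into a probability space isomorphism onto its pushforward, and \cref{theorem:comdiag}, which transports automorphisms across isomorphisms. The only essentially new ingredient needed is the rotation-invariance of $\gamma_n$.

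For assertion (1), a manifold embedding $f: \R^n \to \R^m$ is by definition a smooth injective immersion that is a homeomorphism onto its image, hence in particular continuous and injective. Since $\SpaceX$ is defined to carry exactly the pushforward $f_* \gamma_n$, \cref{theorem:contin_inj} applies verbatim and yields that $f: \SpaceZ \to \SpaceX$ is an isomorphism of probability spaces.

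For assertion (2), I would first verify that any $T \in \mathrm{SO}(\R^n)$ satisfies $T_* \gamma_n = \gamma_n$. This follows because the density $(2\pi)^{-n/2} e^{-\|\vx\|^2/2}$ depends only on $\|\vx\|$ and $|\det T| = 1$, so the change-of-variables formula collapses to the identity. Being an invertible linear map, $T$ is also a diffeomorphism of $\R^n$, hence a diffeomorphic automorphism of $\SpaceZ$. I would then invoke \cref{theorem:comdiag} with the isomorphism $f$ from part (1) to conclude that $S := f \circ T \circ f^{-1}$ is an automorphism of $\SpaceX$; the submanifold clause of \cref{theorem:comdiag} applied with $Z = \R^n$ and the submanifold $\Ima f \subset \R^m$ (which is a submanifold precisely because $f$ is an embedding) then supplies the diffeomorphism statement on $\Ima f$.

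The only mildly delicate point is that $f^{-1}$ is a priori defined only on $\Ima f$, so $S$ is initially given only on $\Ima f$; to view it as an automorphism of $\SpaceX$ in the measure-theoretic sense one extends $S$ to all of $\R^m$ by, say, the identity off $\Ima f$, which is a null set for $f_* \gamma_n$. Once this convention is fixed, everything else is assembled by \cref{theorem:contin_inj} and \cref{theorem:comdiag}, and no further computation is required.
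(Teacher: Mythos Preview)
Your proposal is correct and follows essentially the same approach as the paper: the paper's proof simply says that assertion (1) follows directly from \cref{theorem:contin_inj}, and that assertion (2) follows from the rotation-invariance of the isotropic Gaussian together with \cref{theorem:comdiag}. Your write-up is a more detailed elaboration of the same two reductions, with the only addition being the explicit remark about extending $S$ off $\Ima f$, which the paper leaves implicit.
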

\begin{proof}
  The first claim follows directly from \cref{theorem:contin_inj}. For the second part, it is clear
  that rotations in $\mathrm{SO}(\R^n)$ preserve isotropic Gaussian distribution, and the rest follows from \cref{theorem:comdiag}.
\end{proof}
The connection with generative models is clear if we take $f$ to be an invertible generative model such as RealNVP \cite{dinh2017} or Glow \cite{glow2018}. The assumption of manifold embedding in the proposition can be seen as too limiting in general, and we explain how to `bypass' it in \cref{theorem:immersions} for the interested readers.
In conclusion, if we assume that the distributions we are working with could be represented by an invertible generative model, then there exists a rich space of automorphisms. Given the success of e.g. Glow, this assumption seems to be valid for natural images.

\section{Numerical results}
\label{s.exper}
Since we have established that the existence of automorphisms can negatively impact the results of CycleGAN, we now demonstrate how this can happen by considering a toy case with a known solution and demonstrating that CycleGAN can and does learn a nontrivial automorphism.
The toy experiment which we perform is translation of MNIST dataset to itself. That is, at training time we pick two minibatches
$\text{batch}_A$ and $\text{batch}_B$ from MNIST at random and use these as samples from $\SpaceX$ and $\SpaceY$ respectively.
The generator neural network in this case is a convolutional autoencoder with residual blocks, fully connected layer in the bottleneck and \emph{no} skip connections from encoder to decoder. We also train a simple CNN for MNIST classification in order to classify CycleGAN outputs. The networks were trained using SGD. The `natural' transformation in this case is, of course, the identity mapping and we expect the classification of the inputs and outputs to stay the same. But we shall see that this is not the case.
 \begin{figure}[ht]
  \centering
    \begin{subfigure}[b]{0.2\textwidth}
        \centering
        \includegraphics[trim={0.5cm 1.0cm 0.5cm 0.5cm},clip, width=0.99\linewidth]{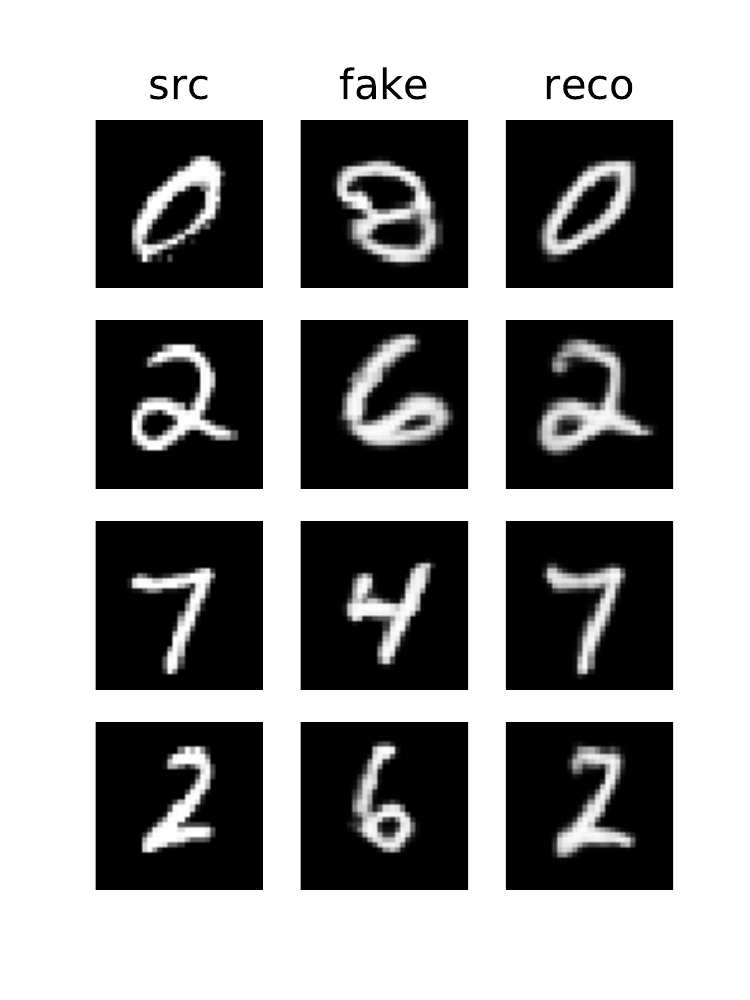}
        \caption{}
        \label{ex.m2m_s1}
    \end{subfigure}%
    ~ 
    \begin{subfigure}[b]{0.2\textwidth}
        \centering
        \includegraphics[trim={0.5cm 1.0cm 0.5cm 0.5cm},clip, width=0.99\linewidth]{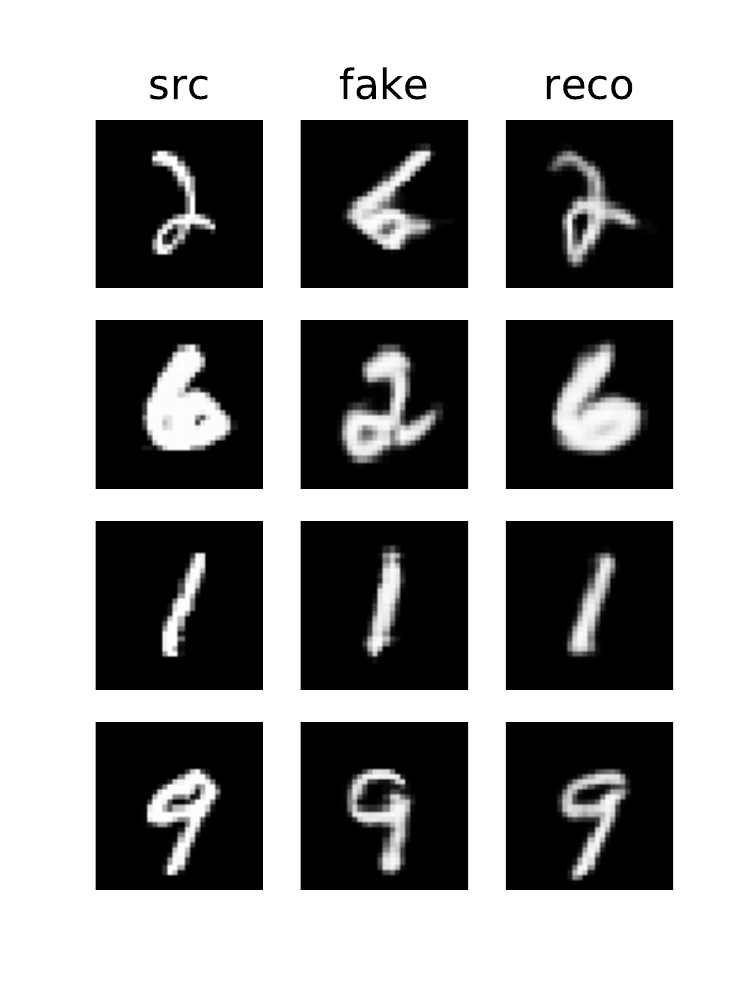}
        \caption{}
        \label{ex.m2m_s2}
      \end{subfigure}
    ~ 
    \begin{subfigure}[b]{0.2\textwidth}
        \centering
        \includegraphics[trim={0.5cm 1.0cm 0.5cm 0.5cm},clip, width=0.99\linewidth]{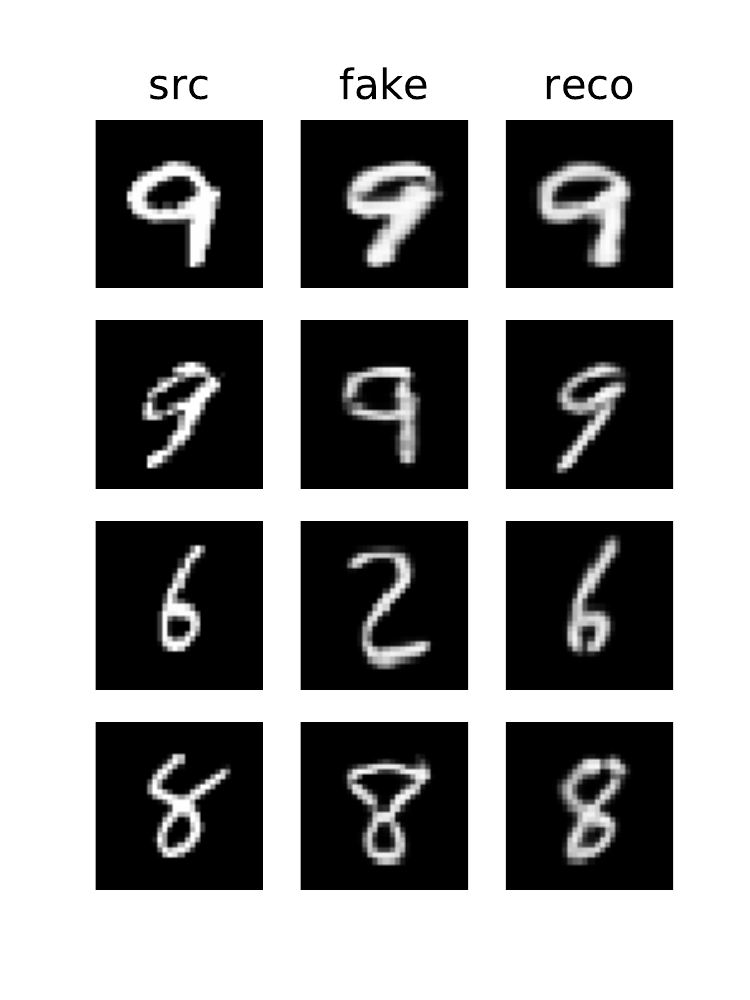}
        \caption{}
        \label{ex.m2m_s3}
      \end{subfigure}
     ~ 
    \begin{subfigure}[b]{0.2\textwidth}
        \centering
        \includegraphics[trim={0.5cm 1.0cm 0.5cm 0.5cm},clip, width=0.99\linewidth]{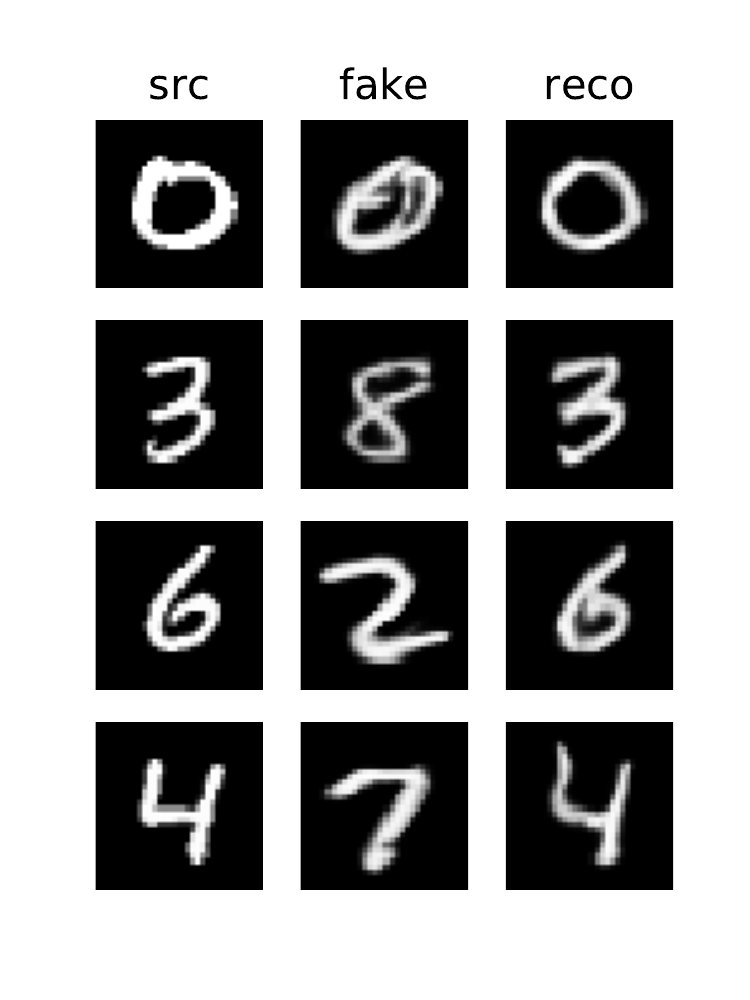}
        \caption{}
        \label{ex.m2m_s4}
      \end{subfigure}
   \\
        \begin{subfigure}[b]{0.2\textwidth}
        \centering
        \includegraphics[trim={0.5cm 1.0cm 0.5cm 0.5cm},clip, width=0.99\linewidth]{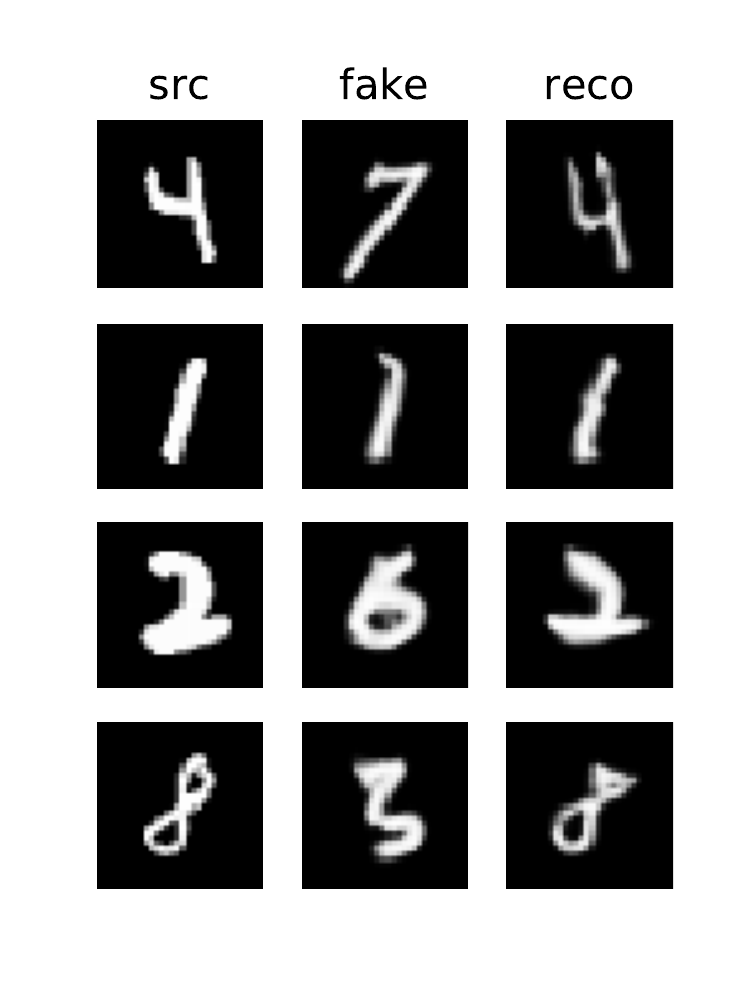}
        \caption{}
        \label{ex.m2m_s5}
    \end{subfigure}%
    ~ 
    \begin{subfigure}[b]{0.2\textwidth}
        \centering
        \includegraphics[trim={0.5cm 1.0cm 0.5cm 0.5cm},clip, width=0.99\linewidth]{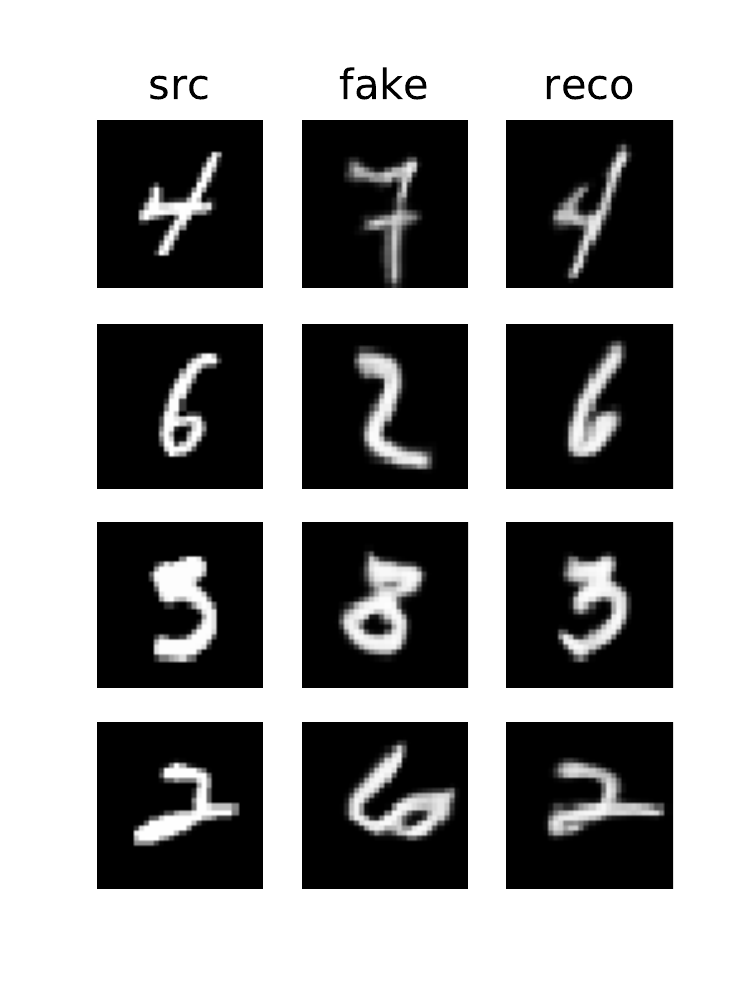}
        \caption{}
        \label{ex.m2m_s6}
      \end{subfigure}
    ~ 
    \begin{subfigure}[b]{0.2\textwidth}
        \centering
        \includegraphics[trim={0.5cm 1.0cm 0.5cm 0.5cm},clip, width=0.99\linewidth]{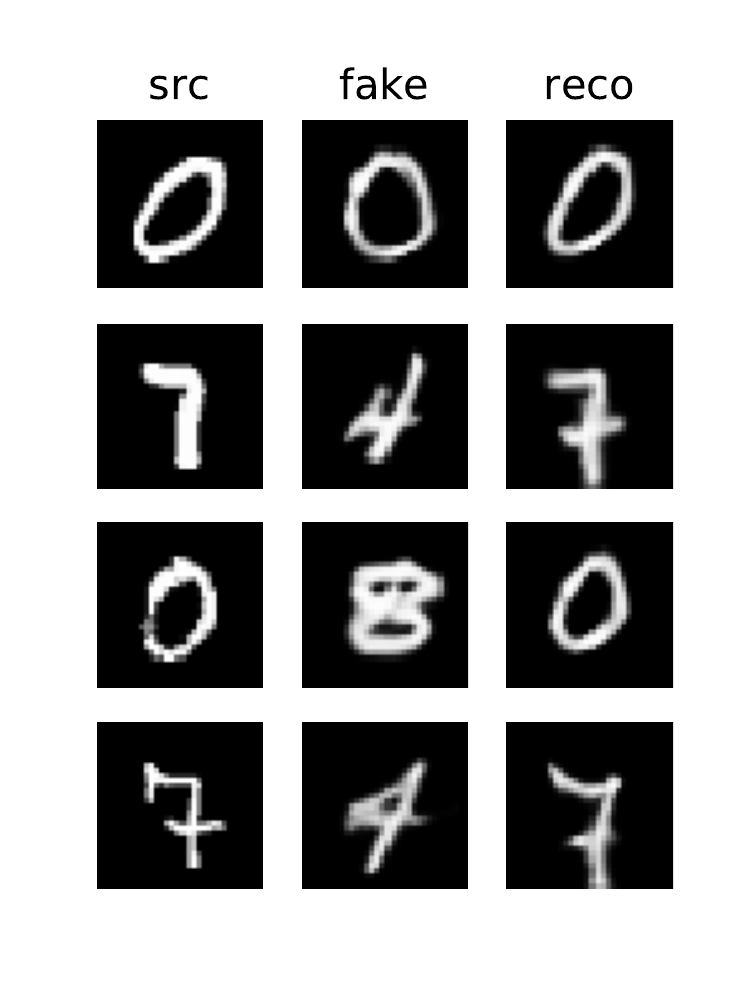}
        \caption{}
        \label{ex.m2m_s7}
      \end{subfigure}
     ~ 
    \begin{subfigure}[b]{0.2\textwidth}
        \centering
        \includegraphics[trim={0.5cm 1.0cm 0.5cm 0.5cm},clip, width=0.99\linewidth]{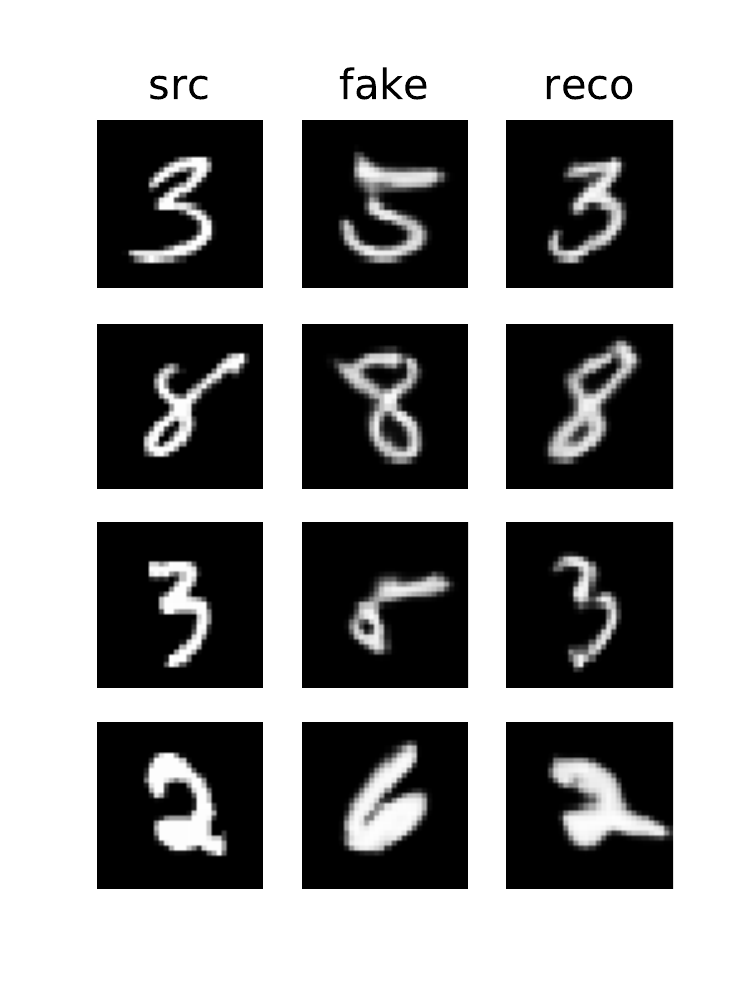}
        \caption{}
        \label{ex.m2m_s8}
      \end{subfigure}
      \caption{Examples on MNIST2MNIST task. (a)-(d) A2A translation, first column are samples from A, second column are 'fake B' and third column are reconstructions of original samples from A (e)-(h) same for B2B translation.}
  \end{figure}
  
In \cref{ex.m2m_s1}--\cref{ex.m2m_s8} we show some examples for the generated fake samples and the reconstruction on test set.
In \cref{ex.mnist2mnist_c1}--\cref{ex.mnist2mnist_c2} we provide the confusion matrices for the A2B and B2A generators respectively. We use these matrices to understand if e.g. the class of transformed image for A2B translation equals the source class, or if is a random variable independent of the source class, or if we can spot some deterministic permutation of classes.
\begin{figure}[ht]
    \centering
    \begin{subfigure}[b]{0.5\textwidth}
        \centering
        \includegraphics[trim={0.5cm 0.9cm 1.0cm 0.25cm},clip, width=0.99\linewidth]{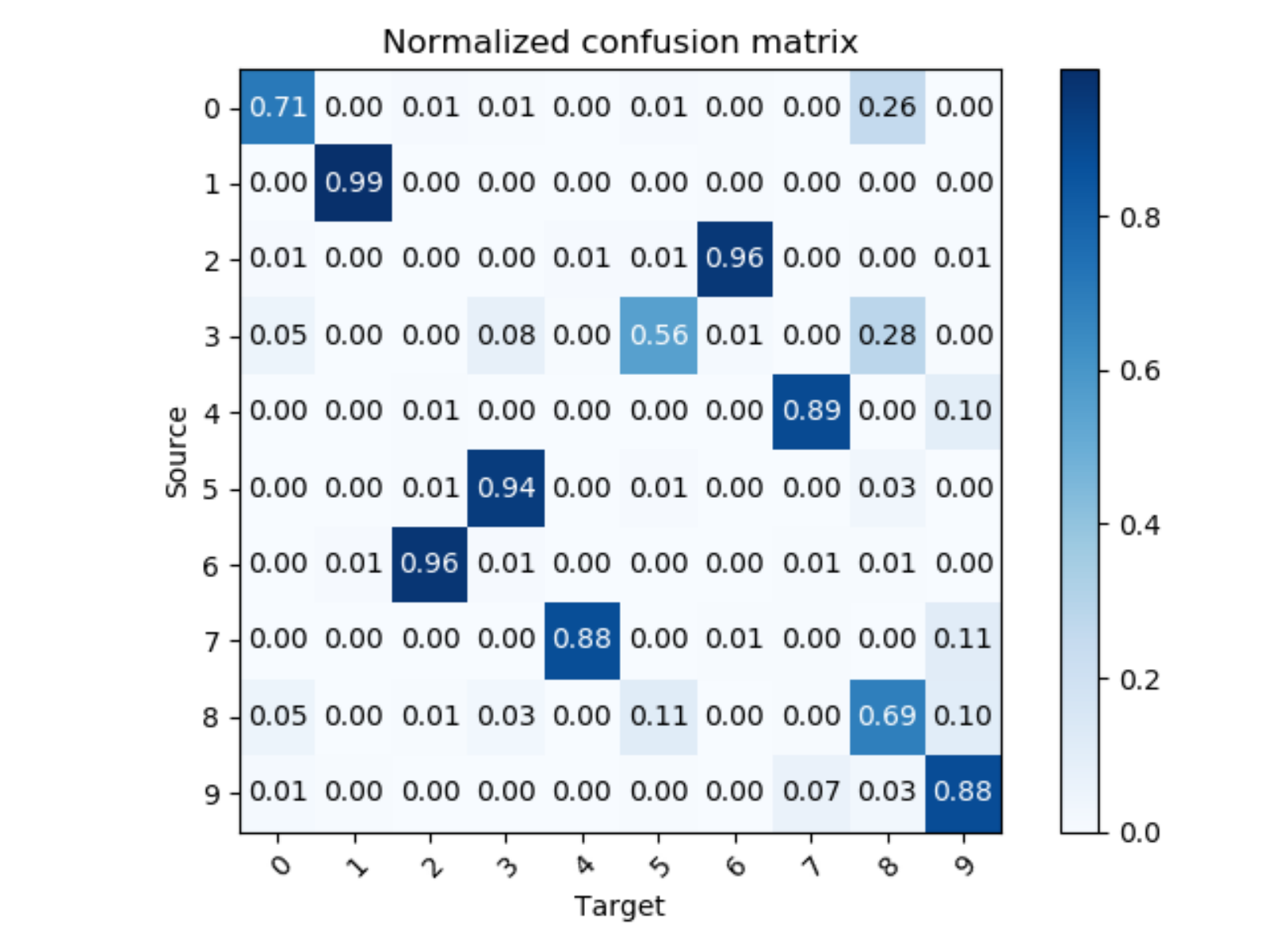}
        \caption{}
        \label{ex.mnist2mnist_c1}
    \end{subfigure}%
    ~ 
    \begin{subfigure}[b]{0.5\textwidth}
        \centering
        \includegraphics[trim={1.0cm 0.9cm 0.5cm 0.25cm},clip, width=0.99\linewidth]{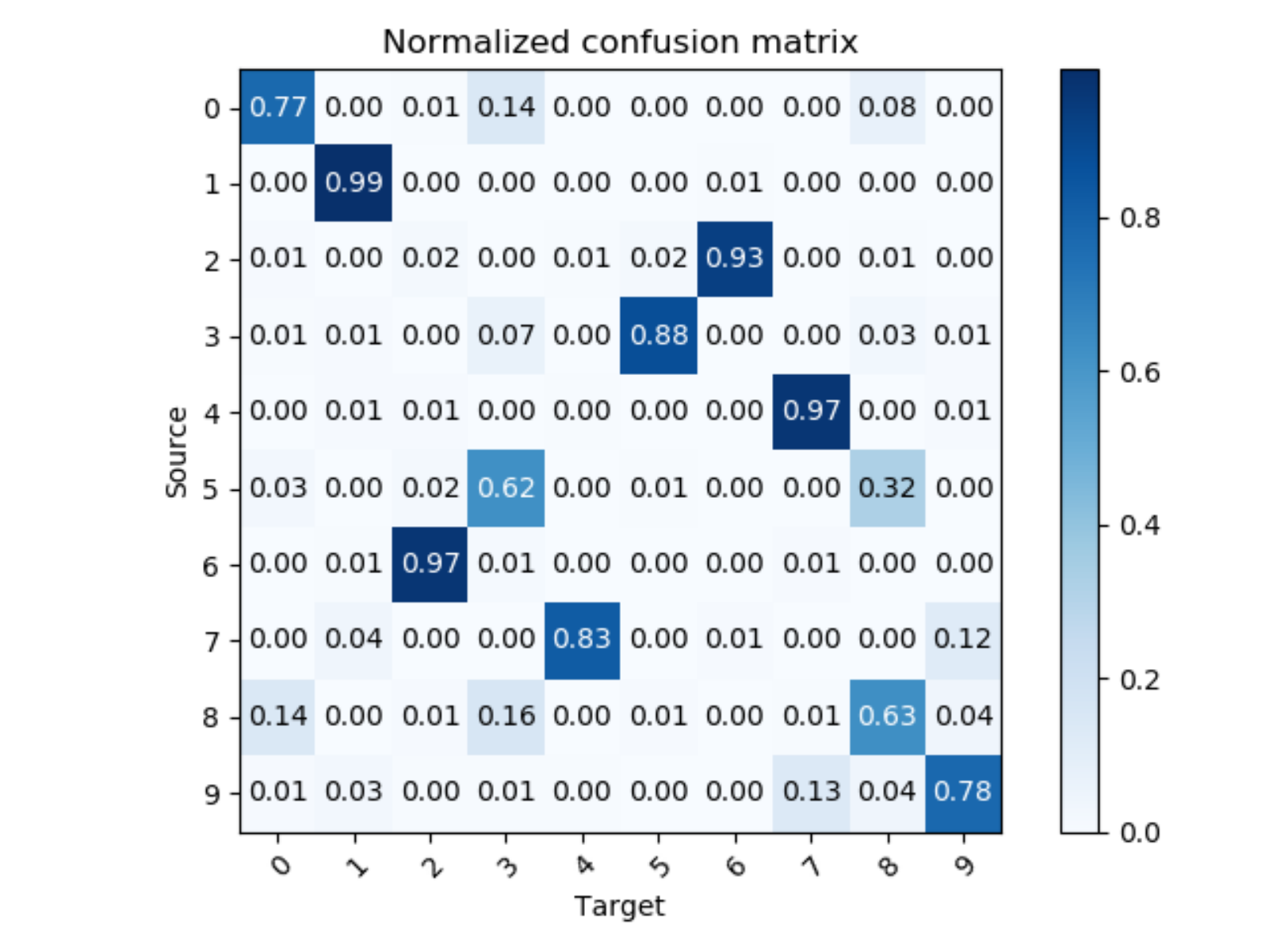}
        \caption{}
      \label{ex.mnist2mnist_c2}
      \end{subfigure}
    \caption{Normalized confusion matrices for A2B and B2A generator respectively.}
\end{figure}
We have observed that in practice the identity mapping is not learned.
Instead, the network leans towards producing a certain permutation of digits, rather than identity or a random assignment of classes independent of the source label. One explanation would be as follows.
Suppose that we can perfectly disentangle class and style in latent digit representation \cite{aae}. Then \emph{any} permutation in $S_{10}$,
acting on the class part of the latent code, determines a probability space automorphism on the space of digits, which can be learned
by a neural network. Further investigation of confusion matrices reveals that the networks introduce short \emph{cycles}, e.g.,
mapping $2$ to $6$ and vice versa. 

We provide additional experiments on BRATS2015 dataset in appendix \ref{s.brats}, where we show that in the absense of identity loss the pure CycleGAN loss demonstrates noticeable symmetry, while the PSNR is clearly not invariant. Increasing the weight of the identity loss term reduces the symmetry, but does not necessarily result in a similar PSNR improvement.

\section{Discussion and future work}
We have shown theoretically that under mild assumptions, the kernel of the CycleGAN admits nontrivial symmetries and has a natural structure of a principle homogeneous space. To show empirically that such symmetries can be learned, we have trained a CycleGAN on the task of translating a domain to itself. In particular, we show that on the MNIST2MNIST task, in contrast to the expected identity, the CycleGAN learns to permute the digits. We have therefore effectively shown, that it is not the CycleGAN loss which prevents this from occurring more often, but hypothesize that the network architecture also has major influence.  We advocate against the usage of CycleGAN when translating between substantially different distributions in critical tasks such as medical imaging, given the theoretical results in \cref{corr:convergence} which suggest ambiguity of solutions, even in the presence of the identity loss term.

We would like to point out that some work has been done recently extending the CycleGAN. For example, in \cite{Na2019} the authors argue that many image-to-image translation tasks are `multimodal' in a sense that there are multiple equally plausible outputs for a single input image, therefore, one should explicitly model this uncertainty in the model. To address this issue, the authors design a network which has two `style' encoders $E_X: \SpaceX \to \SpaceZ_X, E_Y: \SpaceY \to \SpaceZ_Y$, two discriminators for each domain, two conditional encoders for each direction $E_{XY}: \SpaceX \times \SpaceZ_Y \to \SpaceZ_{XY}, E_{YX}: \SpaceY \times \SpaceZ_X \to \SpaceZ_{YX}$ and two generators for each direction $G_{XY}: \SpaceZ_{XY} \to \SpaceY, G_{YX}: \SpaceZ_{YX} \to \SpaceX$. The style encoders serve to extract the `style' of the image, which is present in both domains, e.g., in case of the `female-to-male' task on CelebA dataset the style would correspond to coarsely represented facial features. The loss term forces the mutual information between the style vector of the translated image and the input style to the conditional encoder to be maximized. This allows the network to roughly preserve the style in the translation. While we leave full analysis of this approach for the future work, we expect that such loss would reduce ambiguity in the solution space to those isomorphisms which differ by automorhpishs from the set
\[
\{ \varphi \in \mathrm{Aut}(\SpaceX): E_{X} \circ \varphi (x) = E_{X}(x) \}
\]
leaving the style fixed, since replacing $G_{YX}$ with $\varphi \circ G_{YX}$ and $E_{XY}$ with $E_{XY} \circ \varphi^{-1}$ does not change the loss value for such $\varphi$. Therefore, the reduction in uncertainty of our solution depends on capacity of the encoder $E_{X}$, and, ideally, should be quantified. In particular, one might still need to enforce additional problem-specific features in the encoder $E_X$ to guarantee that important image style content is preserved.
\bibliography{bibliography}
\bibliographystyle{iclr2020_conference}

\newpage
\appendix
\section{Background}
\label{s.appendix}
Firstly, we very briefly explain the probability theory language we use in this article, and we refer the reader to
\citep{otae, bogachev} for more details. Formally, a \emph{measurable space} $(X, \mathcal X)$ is a pair of a set $X$ and a $\sigma$-algebra $\mathcal X$ of 
subsets of $X$. Given a topological space $X$ with topology $\mathcal U$, there exists the smallest $\sigma$-algebra $\Borel(X)$,
which contains all open sets in $\mathcal U$. This $\sigma$-algebra is called \emph{Borel} $\sigma$-algebra of $X$ and its elements are called
\emph{Borel} sets. A \emph{probability space} $\SpaceX = (X, \mathcal X, \mu)$ is a triple of a set $X$, a sigma algebra $\mathcal X$ of 
subsets of $X$ and a probability measure $\mu$ defined on the sigma-algebra $\mathcal X$. Given a probability space $(X, \mathcal X, \mu)$, a measurable set $A \in \mathcal X$ is called an \emph{atom} if $\mu(A) > 0$ and for all measurable $B \subset A$ such that $\mu(B) < \mu(A)$ we have $\mu(B) = 0$. Given measurable spaces $(X, \mathcal X)$ and $(Y, \mathcal Y)$,
we say that a mapping $\phi: X \to Y$ is \emph{measurable} if for any $A \in \mathcal Y$ we have $\phi^{-1}(A) \in \mathcal X$. If  $\SpaceX = (X, \mathcal X, \mu)$
and $\SpaceY = (Y, \mathcal Y, \nu)$ are probability spaces and $\phi: X \to Y$ is a measurable map, we say that $\phi$ is \emph{measure-preserving}
if for all $A \in \mathcal Y$ we have $\mu(\phi^{-1}(A)) = \nu(A)$. An approximation argument easily shows that a measurable transformation $\phi: X \to X$ is measure-preserving if and only if for all nonnegative measurable functions $f$ on $X$ we have
\[
\E_{\vx \sim \SpaceX} f(\vx) d\mu = \E_{\vx \sim \SpaceX} (f \circ \phi)(\vx) d\mu.
\]
Given a probability space $\SpaceX$, a measurable space $(Y, \mathcal Y)$ and
a measurable map $\phi: X \to Y$, we define the \emph{push-forward measure} $\phi_*\mu$ on $\mathcal Y$ by setting $(\phi_*\mu)(A) := \mu(\phi^{-1}(A))$
for all $A \in \mathcal Y$. 

Let $(X, \mathcal X, \mu)$ and $(Y, \mathcal Y, \nu)$ be probability spaces and $f: X \to Y$ be a measure-preserving map. A measurable map $g: Y \to X$ is called an \emph{essential inverse} of $f$ if $f \circ g = \mathrm{id}_Y$ for  $\nu$-almost every  $\vy \in Y$ and $g \circ f = \mathrm{id}_X$ for $\mu$-almost every $\vx \in X$.
One can show that essential inverse is measure preserving and uniquely defined up to equality almost everywhere. We say that $f$ is an \emph{isomorphism} if it admits an essential inverse. An isomorphism $f: X \to X$ is called an \emph{automorphism}. 

\begin{lemma}[Push-forward property for $f$-divergences]
  Let $p, q$ be distributions on $\R^n$ and $\varphi: \R^n \to \R^n$ be a diffeomorphism. Then for any $f$-divergence $D_f$ we have
  \begin{equation}
    D_f(\varphi_* p \Vert q) = D_f(p \Vert (\varphi^{-1})_* q)
  \end{equation}
\end{lemma}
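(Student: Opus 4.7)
The plan is to reduce the statement to a more general invariance principle: for any diffeomorphism $T : \R^n \to \R^n$ and any pair of probability measures $\mu, \nu$ on $\R^n$, one has $D_f(T_*\mu \Vert T_*\nu) = D_f(\mu \Vert \nu)$. Once this is in hand, taking $T = \varphi^{-1}$ and $(\mu, \nu) = (\varphi_* p, q)$ produces $T_*\mu = (\varphi^{-1})_*\varphi_* p = p$ and $T_*\nu = (\varphi^{-1})_* q$, which is precisely the identity asserted by the lemma.

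To establish the invariance principle I would work directly with densities. Recalling the density representation
\[
D_f(p \Vert q) = \int_{\R^n} f\!\left( \frac{p(x)}{q(x)} \right) q(x)\, dx
\]
under the standard conventions for $f$-divergences, and writing $J_\varphi(x) = \det D\varphi(x)$, a routine change of variables shows $(\varphi_* p)(y) = p(\varphi^{-1}(y)) \cdot |J_\varphi(\varphi^{-1}(y))|^{-1}$ and, dually, $((\varphi^{-1})_* q)(x) = q(\varphi(x)) \cdot |J_\varphi(x)|$.

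The main computation is then a direct substitution $y = \varphi(x)$ in the integral $D_f(\varphi_* p \Vert q)$. The Jacobian contribution from $dy = |J_\varphi(x)|\, dx$ combines with $(\varphi_* p)(\varphi(x)) = p(x)|J_\varphi(x)|^{-1}$ so that the argument of $f$ becomes $p(x)/(q(\varphi(x))|J_\varphi(x)|) = p(x)/((\varphi^{-1})_* q)(x)$, while the surviving factor $q(\varphi(x))|J_\varphi(x)|$ is exactly $((\varphi^{-1})_* q)(x)$. The integrand thus coincides with the one defining $D_f(p \Vert (\varphi^{-1})_* q)$, and the claim follows.

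I do not expect a genuine obstacle; the whole argument is Jacobian bookkeeping. The only mild technical point is treating the null sets where $q$ or $((\varphi^{-1})_* q)$ vanishes: since $\varphi$ is a diffeomorphism these null sets correspond bijectively under $\varphi$, and the usual $f$-divergence conventions $0 \cdot f(0/0) = 0$ and $a \cdot f(a/0) = a \lim_{t \to \infty} f(t)/t$ are stable under the change of variables, so the boundary contributions match on the two sides of the equality.
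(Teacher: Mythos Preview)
Your proposal is correct and follows essentially the same route as the paper: both arguments write out the push-forward densities via the change-of-variables formula and then perform the substitution $y=\varphi(x)$ in the defining integral, so that the Jacobian factor migrates from the numerator into $q(\varphi(x))\,|J_\varphi(x)| = ((\varphi^{-1})_* q)(x)$. Your framing through the invariance principle $D_f(T_*\mu \Vert T_*\nu)=D_f(\mu\Vert\nu)$ is a pleasant conceptual wrapper, and your explicit remark about the null-set conventions is a point the paper leaves implicit, but the underlying computation is the same Jacobian bookkeeping.
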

\begin{proof}
First of all, change of variables formula for the integral implies that
\begin{align*}
  &(\varphi_* p) (\vx) = p(\varphi^{-1}(\vx)) \left| \det {\frac{\partial \varphi^{-1}}{\partial \vx}} \right|(\vx) \quad \text{ for all } \vx \in \R^n, \\
  &((\varphi^{-1})_* q) (\vy) = q(\varphi(\vy)) \left| \det {\frac{\partial \varphi}{\partial \vy}} \right|(\vy) \quad \text{ for all } \vy \in \R^n.
\end{align*}
Therefore,
\begin{align*}
D_f(\varphi_* p \Vert q) = \int f\left(\frac {\varphi_* p (\vx)} {q(\vx)} \right) q(\vx) d\vx = \int f\left(\frac {p(\varphi^{-1}(\vx)) \left| \det {\partial \varphi^{-1} / \partial \vx} \right|(\vx)} {q(\vx)}\right) q(\vx) d\vx. 
\end{align*}
Applying change of variables formula with $\vx = \varphi(\vy)$, we get
\begin{align*}
  &\int f\left(\frac {p(\varphi^{-1}(\vx)) \left| \det {\partial \varphi^{-1} / \partial \vx} \right|(\vx)} {q(\vx)}\right) q(\vx) d\vx\\
  &= \int f\left(\frac {p(\varphi^{-1}(\varphi(\vy))) \left| \det {\partial \varphi^{-1} / \partial \vx} \right|(\varphi(\vy))} {q(\varphi(\vy))}\right) q(\varphi(\vy)) \left| \det \frac {\partial \varphi} {\partial \vy} \right| (\vy) d\vx \\
  &\overset{*}{=}\int f\left(\frac {p(\vy)} {q(\varphi(\vy)) \left| \det {\partial \varphi / \partial \vy} \right|(\vy) }\right) q(\varphi(\vy)) \left| \det \frac {\partial \varphi} {\partial \vy} \right| (\vy) d\vx = D_f(p \Vert (\varphi^{-1})_* q),
\end{align*}
where the equality in $(*)$ uses a general property of Jacobians of smooth invertible maps that $\frac{\partial \varphi^{-1}}{\partial \vx} \circ \varphi = \left( \frac{\partial \varphi}{\partial \vy} \right)^{-1}$.
Hence $D_f(\varphi_* p \Vert q) = D_f(p \Vert (\varphi^{-1})_* q)$, which completes the proof.
\end{proof}

We remind the reader that a \emph{Polish space} is a separable completely metrizable topological space. A \emph{Borel probability space} is a Polish
space endowed with a probability measure $\mu$ on its Borel $\sigma$-algebra, and we will also say that $\mu$ is a \emph{Borel probability measure}.
The basic examples of Borel probability spaces would be e.g.\ the spaces $[0, 1]^n \subset \mathbb R^n$ with its Borel $\sigma$-algebra $\Borel(\mathbb R^n)$, endowed with Lebesgue measure $\lambda_n$.
A Borel $\sigma$-algebra of the space $[0, 1]^n$ endowed with Lebesgue measure $\lambda_n$ can be extended by adding all $\lambda_n$-measurable sets,
leading to the $\sigma$-algebra of \emph{Lebesgue-measurable sets}.

For the proof of \cref{theorem:contin_inj} we need the following theorem, see \cite{kechris}, Theorem 15.1.
\begin{theorem}[Lusin-Souslin theorem]
\label{theorem:lousin}
Let $X, Y$ be Polish spaces and $f: X \to Y$ be continuous. If $A \subset X$ is Borel and $f|_A$ is injective, then $f(A)$ is Borel.
\end{theorem}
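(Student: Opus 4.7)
The plan is to invoke the standard apparatus of descriptive set theory: Lusin's separation theorem combined with a Souslin scheme on a tree representation of the Polish source. First I would reduce to the case $A = X$. Every Borel subset of a Polish space admits a finer Polish topology in which it becomes closed (Kuratowski's theorem), so I would equip $A$ with such a topology $\tau'$, regard $(A,\tau')$ as a Polish space in its own right, and note that $f|_A$ remains continuous with respect to the finer topology on the source and the original topology on $Y$. It therefore suffices to prove: if $X, Y$ are Polish and $f\colon X \to Y$ is a continuous injection, then $f(X)$ is Borel.

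Next I would realize $X$ as a continuous bijective image of a closed subset of Baire space $\mathbb{N}^{\mathbb{N}}$, so after composition we may assume $X = [T]$, the set of infinite branches of a pruned tree $T \subset \mathbb{N}^{<\mathbb{N}}$, with $f\colon [T] \to Y$ continuous and injective. For each node $s \in T$, set $X_s := \{x \in [T] : s \subset x\}$, which is clopen in $[T]$, and $Y_s := f(X_s)$, which is an analytic subset of $Y$. Injectivity gives $Y_s \cap Y_{s'} = \emptyset$ whenever $s, s'$ are incomparable in $T$. By Lusin's separation theorem applied level by level, and by intersecting with the Borel set chosen at the parent node, I would produce Borel sets $B_s \subset Y$ satisfying $Y_s \subseteq B_s$, pairwise disjointness of $\{B_s : |s| = n\}$ for each $n$, and $B_s \subseteq B_{s^{-}}$ where $s^{-}$ denotes the parent. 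Fixing a complete compatible metric on $Y$, I would additionally impose $\operatorname{diam}(B_s) \leq 2^{-|s|}$, achievable by intersecting each $B_s$ with a Borel partition of $Y$ into small pieces.

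I would then claim $f(X) = \bigcap_{n \geq 0} \bigcup_{s \in T,\, |s|=n} B_s$, which is manifestly a Borel subset of $Y$. The inclusion $\subseteq$ is immediate from $f(x) \in Y_{x|n} \subseteq B_{x|n}$. For the reverse inclusion, a point $y$ in the right-hand side determines, by pairwise disjointness at each level, a unique sequence $s_0 \subset s_1 \subset \dotsb$ with $y \in B_{s_n}$ for every $n$; this forms a branch in $T$, hence defines $x \in [T]$, and combining $f(x) \in B_{s_n}$ with the diameter control forces $f(x) = y$ by completeness of $Y$.

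The main obstacle, I expect, is the simultaneous bookkeeping of the three conditions on the $B_s$ (containment in the parent, pairwise disjointness at the current level, and diameter shrinkage), which has to be executed by an interleaved induction on the height of $T$. This is the heart of the Souslin-scheme technique, and its combinatorial details, together with the proof of Lusin's separation theorem from Choquet's capacitability theorem, constitute the substance of Theorem 15.1 in \cite{kechris}. Past this packaging the argument is entirely standard.
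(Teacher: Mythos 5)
First, a point of reference: the paper does not prove this statement. It is quoted verbatim from Kechris and used as a black box (``see \cite{kechris}, Theorem 15.1'') in the proof of \cref{theorem:contin_inj}, so there is no in-paper argument to compare yours against; your sketch has to stand on its own. Its skeleton is indeed the standard one --- reduce to $A=X$ by passing to a finer Polish topology making $A$ clopen, pull back to a closed subset $[T]$ of Baire space via a continuous bijection, separate the pairwise disjoint analytic images $Y_s=f(X_s)$ level by level with Lusin's theorem, and read $f(X)$ off as $\bigcap_n\bigcup_{|s|=n}B_s$. All of that is correct.

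The genuine gap is the diameter-control step. Since $[T]$ is not compact, $f$ is not uniformly continuous, and $\operatorname{diam}\bigl(f(X_s)\bigr)$ need not go to $0$ with $|s|$; already at level $1$ the sets $f(X_{(k)})$, $k\in\mathbb{N}$, can each have large diameter. Hence no Borel set $B_s$ can simultaneously satisfy $Y_s\subseteq B_s$ and $\operatorname{diam}(B_s)\leq 2^{-|s|}$, and these two requirements are exactly what your reverse inclusion leans on. Intersecting $B_s$ with a fine Borel partition of $Y$ does not repair this: $Y_s$ is then only \emph{covered} by the resulting pieces, not contained in any one of them, and the branch $x$ that $y$ determines no longer forces $f(x)$ and $y$ into a common small set --- $f(x)$ may land in a different partition piece than $y$ at every level. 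The standard repair (and the one in the cited proof) is to drop the uniform diameter bound and instead shrink the separating sets to $B_s^{*}:=B_s\cap B_{s^{-}}^{*}\cap\overline{f(X_s)}$, then finish with \emph{pointwise} continuity of $f$ at the branch $x$: for every $\epsilon>0$ there is $n$ with $f(X_{x|n})$ contained in the $\epsilon$-ball around $f(x)$, so $y\in B_{x|n}^{*}\subseteq\overline{f(X_{x|n})}$ forces $d\bigl(y,f(x)\bigr)\leq\epsilon$. Note that completeness of $Y$ plays no role in this step --- the work is done by continuity of $f$ at $x$, which is precisely the ingredient your version omits.
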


\begin{proof}[Proof of \cref{theorem:contin_inj}]
  Denote the image $f(\R^n) \subset \R^m$ by $\Ima f$. Then $\Ima f \subset \R^m$ is a Borel subset, since $\R^n$ is a countable union of a compact sets and $f$ is continuous.
  Furthermore, from Lusin-Souslin theorem (\cref{theorem:lousin}) it follows that for every Borel subset $A \subset \R^n$ its image $f(A) \subset \R^m$ is Borel as well.
  Pick a point $x_0 \in \R^n$ which is not an atom of $\mu$. We want to define an almost everywhere inverse $\tilde f$ of $f$.
  Define a function $\tilde f: \R^m \to \R^n$ by
  \begin{equation*}
   \tilde f(\vx)=\begin{cases}
     f^{-1}(\vx), & \text{if $\vx \in \Ima f$}.\\
     x_0, & \text{otherwise}.
   \end{cases}
 \end{equation*}
 Using the remark above it is easy to see that $\tilde f$ is Borel measurable and that $(f_* \mu)(\tilde f^{-1}(A)) = \mu (A)$ for every Borel $A$. It follows from the definition that $\tilde f \circ f = \mathrm{id}_{\R^n}$ and that 
 \begin{equation*}
   f \circ \tilde f(\vx)=\begin{cases}
     x, & \text{if $\vx \in \Ima f$}.\\
     f(x_0), & \text{otherwise}.
   \end{cases}
 \end{equation*}
Since $(f_* \mu)(\Ima f) = 1$, $\tilde f$ is an almost everywhere inverse to $f$. We conclude that $f$ is a probability space isomorphism.
\end{proof}

Secondly, we remind the reader of a couple of notions from differential geometry which we use in the text, and we refer the reader
to e.g.\ \citep{warner} for more details. Given a subset $X$ of a manifold $M$
and a subset $Y$ of a manifold $N$, a function $f : X \to Y$ is said to be smooth if for all $p \in X$ there is a neighborhood $U \subset M$ of
$p$ and a smooth function $g: U \to N$ such that $g$ extends $f$, i.e., the restrictions agree $g|_{U \cap X} = f|_{U \cap X}$.
$f$ is said to be a \emph{diffeomorphism} between $X$ and $Y$ if it is bijective, smooth and its inverse is smooth. Let $M$ and $N$ be smooth manifolds. A differentiable mapping $f: M \to N$ is said to be an \emph{immersion} if the tangent map
$d_p f: T_p M \to T_{f(p)} N$ is injective for all $p \in M$. If, in addition, $f$ is a homeomorphism onto $f(M) \subset N$,
where $f(M)$ carries the subspace topology induced from $N$, we say that $f$ is an \emph{embedding}. If $M \subset N$ and the inclusion map
$\imath: M \to N$ is an embedding, we say that $M$ is a \emph{submanifold} of $N$. Thus, the domain of an embedding is
diffeomorphic to its image, and the image of an embedding is a submanifold. 

We close this section with a small lemma, explaining how one can weaken the embedding assumption for generative models in \cref{theorem:smoothautos}.
\begin{lemma}
\label{theorem:immersions}
  Let $f: \mathbb R^n \to \mathbb R^m$ be an injective manifold immersion. Let $B_R \subset \mathbb R^n$ be an open ball of radius $R>0$ in $\mathbb R^n$ and $\overline B_R$ be its closure.
  Then $f: B_R \to f(B_R)$ is a manifold embedding.
\end{lemma}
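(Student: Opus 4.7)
The plan is to leverage the compactness of $\overline{B_R}$ to upgrade the injective immersion to an embedding on the restriction. Recall that by the definition of embedding given in the excerpt, we need $f\vert_{B_R}$ to be (i) a differentiable immersion, (ii) injective, and (iii) a homeomorphism onto $f(B_R) \subset \mathbb R^m$ when the image is equipped with the subspace topology. Conditions (i) and (ii) are inherited automatically from the hypotheses on $f$ since both immersion and injectivity are preserved under restriction to a subset. The entire content of the lemma is therefore condition (iii), i.e., that the inverse $f\vert_{B_R}^{-1}\colon f(B_R) \to B_R$ is continuous.

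First I would consider the restriction $f\vert_{\overline{B_R}}\colon \overline{B_R} \to f(\overline{B_R})$. Since $\overline{B_R} \subset \mathbb R^n$ is closed and bounded, it is compact. The map $f$ is smooth, hence continuous, so $f\vert_{\overline{B_R}}$ is a continuous bijection from the compact space $\overline{B_R}$ onto the Hausdorff space $f(\overline{B_R})$ (with the subspace topology inherited from $\mathbb R^m$). It is a standard point-set topology fact that any continuous bijection from a compact space to a Hausdorff space is a homeomorphism: closed subsets of $\overline{B_R}$ are compact, their images under a continuous map are compact, and compact subsets of a Hausdorff space are closed, so $f\vert_{\overline{B_R}}$ is a closed map and therefore its inverse is continuous.

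Next I would restrict from the closed ball back down to the open ball. Because $B_R$ is open in $\overline{B_R}$, its image $f(B_R)$ is open in $f(\overline{B_R})$ with respect to the homeomorphism just established. Moreover, the subspace topology on $f(B_R)$ inherited from $\mathbb R^m$ coincides with the subspace topology inherited from $f(\overline{B_R})$ (this is transitivity of the subspace topology). Thus $f\vert_{B_R}\colon B_R \to f(B_R)$ is a restriction of a homeomorphism to an open set, hence itself a homeomorphism onto its image. Combined with the inherited immersion and injectivity properties, this gives the embedding.

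The only potentially delicate point, which I do not expect to be a real obstacle, is confirming that the subspace topology on $f(B_R)$ induced from $\mathbb R^m$ agrees with the one induced from $f(\overline{B_R})$; this follows immediately from the fact that the subspace topology is transitive. No analytic estimates or use of the immersion hypothesis beyond smoothness are needed, since the compactness of $\overline{B_R}$ does all of the work of converting ``injective immersion'' into ``embedding.''
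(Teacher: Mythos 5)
Your argument is correct and follows essentially the same route as the paper's proof: use compactness of $\overline{B_R}$ to show that the restriction of $f$ to the closed ball is a homeomorphism onto its image (continuous bijection from a compact space to a Hausdorff space), then restrict to the open ball. Your additional remarks on transitivity of the subspace topology and on the inheritance of the immersion and injectivity conditions simply make explicit what the paper leaves implicit.
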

\begin{proof}
  Since $\overline B_R$ is compact and $f$ is continuous, image of every closed subset $A \subseteq \overline B_R$ is compact and hence closed.
  This shows that $f^{-1}: f(\overline B_R) \to \overline B_R$ is continuous and thus $f: \overline B_R \to f(\overline B_R)$ is a homeomorphism.
  Restricting to the open ball $B_R \subset \overline B_R$, we conclude that $f: B_R \to f(B_R)$ is a homemorphism and thus a manifold embedding.
\end{proof}
As a consequence, for our example with spherical Gaussian latent vector one can take sufficiently large ball of radius $R>0$ in the latent space,
truncating the latent distribution to `sufficiently likely' values. This ball remains invariant under rotations, thus leading to a differentiable
automorphism on the submanifold of `sufficiently likely' images.

\section{BRATS2015 experiments}
\label{s.brats}
We present some additional results on the BRATS2015 dataset. For this experiment Unet-based generators with residual connections were used. The number
of downsampling layers was 4 for both generators, and skip connections were preserved. We trained all models for 20 epochs with Adam optimizer and learning rate $0.0002$. We trained
4 models with $\alpha_{\text{id}} \in \{ 0.0, 10.0, 20.0, 40.0\}$. No data augmentation was used so as to avoid creating any additional symmetries. All images were
normalized by dividing by the $95\%$-percentile, as is common in medical imaging when working with MR data.

We hypothesize that flipping images horizontally is a distribution symmetry. We measure the final test loss for both the network output (\textbf{Loss}) and its flipped version (\textbf{Loss (f)}),
as well as the PSNR for both translation directions without (\textbf{PSNR T1-Fl}, \textbf{PSNR Fl-T1}) and with horizontal flips (\textbf{PSNR T1-Fl (f)}, \textbf{PSNR Fl-T1 (f)}).
We summarize these results in \cref{table:brats}.

We observe that in the absense of identity loss the pure CycleGAN loss demonstrates noticeable symmetry, while the PSNR is clearly not invariant. Increasing
the weight of the identity loss term reduces the symmetry, but does not always result in a similar PSNR improvement. We present some samples from the model with $\alpha_{\text{id}} = 0$ in \cref{ex.brats_1}, \cref{ex.brats_2}.

\begin{table}[ht]
\caption{Results on BRATS2015}
\label{table:brats}
\begin{center}
\begin{tabular}{lllllll}
  \multicolumn{1}{c}{\bf $\alpha_{\text{id}}$}  &\multicolumn{1}{c}{\bf Loss} &\multicolumn{1}{c}{\bf Loss (f)} &\multicolumn{1}{c}{\bf PSNR T1-Fl} &\multicolumn{1}{c}{\bf PSNR T1-Fl (f)}
                                             &\multicolumn{1}{c}{\bf PSNR Fl-T1} &\multicolumn{1}{c}{\bf PSNR Fl-T1 (f)}
\\ \hline \\
0.0         &0.83 & 1.01 & 23.8 & 15.4 & 26.2 & 15.6\\
10.0             &0.46 & 2.31 & 24.6 & 15.5 & 27.1 & 16.0 \\
20.0             &0.93 & 4.62 & 24.0 & 15.2 & 26.7 & 15.8 \\
40.0             &3.36 & 11.27 & 24.6 & 16.0 & 27.0 & 16.0 \\
\end{tabular}
\end{center}
\end{table}

\begin{figure}[ht]
  \centering
    \begin{subfigure}[b]{0.8\textwidth}
        \centering
        \includegraphics[trim={0.5cm 1.0cm 0.5cm 0.0cm},clip, width=0.99\linewidth]{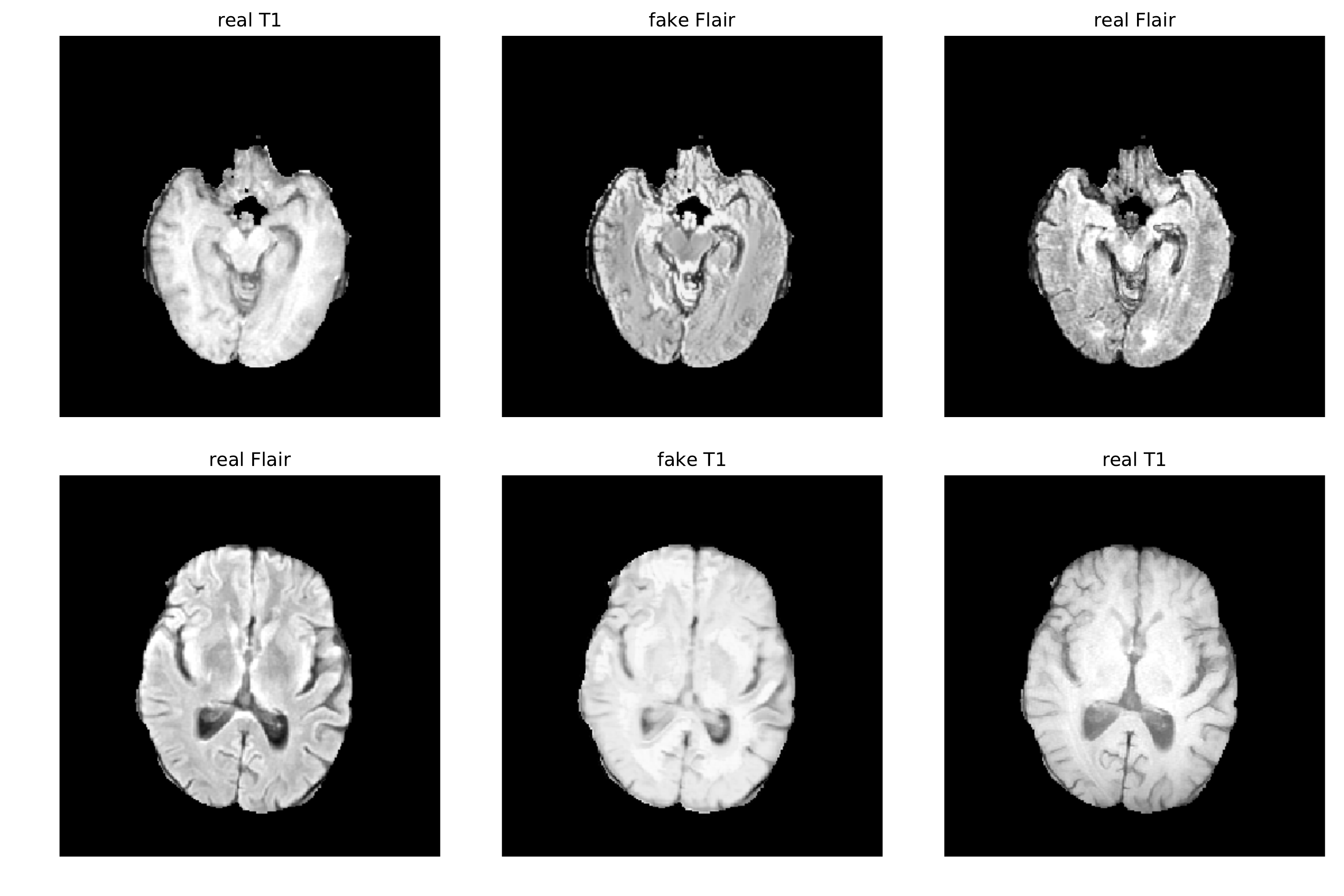}
        \caption{}
        \label{ex.brats_1}
    \end{subfigure}%
    \\
    \begin{subfigure}[b]{0.8\textwidth}
        \centering
        \includegraphics[trim={0.5cm 1.0cm 0.5cm 0.0cm},clip, width=0.99\linewidth]{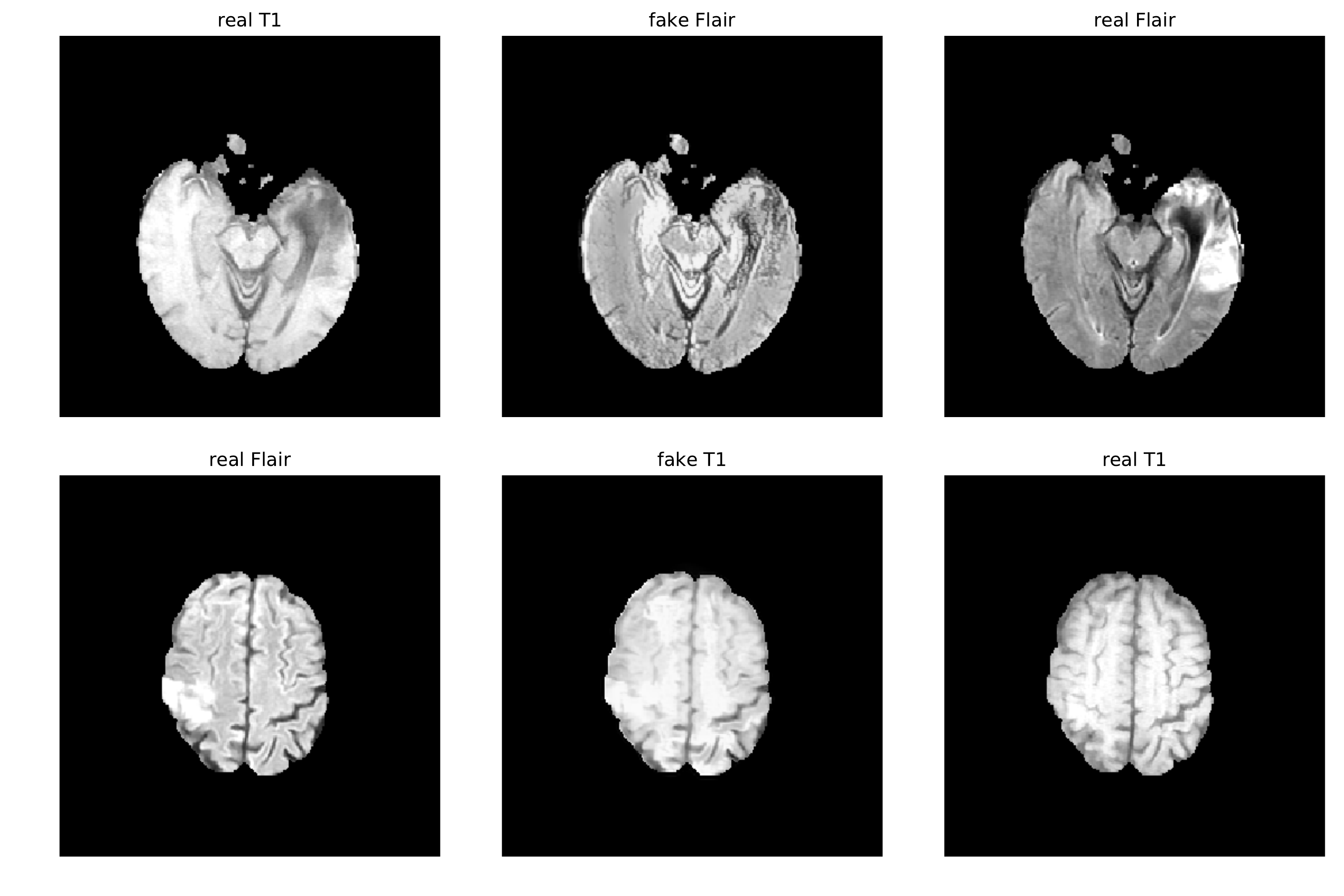}
        \caption{}
        \label{ex.brats_2}
      \end{subfigure}
      \caption{T1-Flair and Flair-T1 translation samples.}
    \end{figure}

\end{document}